
\documentclass[final,onefignum,onetabnum]{siamart220329}

\usepackage{graphicx,float}
\usepackage{subfigure}
\usepackage{amsfonts}

\newcommand{\floor}[1]{\lfloor #1 \rfloor}

\usepackage{algorithm}
\usepackage{algorithmic}
\usepackage{setspace}
%

\usepackage{float}

%
%
%
\newsiamremark{remark}{Remark}
\newtheorem{assumption}[theorem]{Assumption}



\usepackage{hyperref}
\hypersetup{
colorlinks=true, 
breaklinks=true, 
urlcolor= blue, 
linkcolor= red, 
citecolor=blue, 
pdftitle={}, 
pdfauthor={}, 
pdfsubject={} 
}

\tolerance=1
\emergencystretch=\maxdimen
\hyphenpenalty=10000
\hbadness=10000

\newcommand{\rev}[1]{\textcolor{black}{#1}}

\usepackage{comment}


\title{Logarithmic regret bounds for continuous-time average-reward Markov decision processes }

\author{Xue feng
			Gao\thanks{Department of Systems
			Engineering and Engineering Management, The Chinese University of Hong Kong, Hong Kong, China.
			(\email{xfgao@se.cuhk.edu.hk}).}
	 \and	Xun Yu Zhou\thanks{Department of Industrial Engineering and Operations Research and The Data Science Institute, Columbia University, New York, NY 10027, USA. (\email{xz2574@columbia.edu}). }
		}

\begin{document}

\maketitle

\begin{abstract}
We consider reinforcement learning for continuous-time Markov decision processes (MDPs) in the infinite-horizon, average-reward setting.
In contrast to discrete-time MDPs,  a continuous-time process moves to a state and  stays there for a random holding time after an action is taken. With unknown transition probabilities and rates of exponential holding times,
we derive instance-dependent regret lower bounds that are logarithmic in the time horizon. Moreover, we design a learning algorithm
and establish a finite-time regret bound that achieves the logarithmic growth rate. Our analysis builds upon upper confidence reinforcement learning, a delicate estimation of the mean holding times, and stochastic comparison of point processes.

\end{abstract}

\begin{keywords}
Continuous-time Markov decision processes,  average reward, instance-dependent regret bounds, upper confidence reinforcement learning, stochastic comparison
\end{keywords}

\begin{AMS}
  90C40, 60J27
\end{AMS}


\section{Introduction}

Reinforcement learning (RL) is the problem of an agent learning how to map states to actions in order to maximize the reward over time in an unknown environment. It has received significant attention in the past decades, and the key challenge is in balancing the trade-off between exploration and exploitation \cite{sutton2018reinforcement}.
The common model for RL is a Markov Decision Process (MDP), which provides a mathematical framework for modeling sequential decision making problems under uncertainty.
Most of the current studies
on RL focus on developing algorithms and analysis for \textit{discrete-time} MDPs. In contrast, less attention has been paid to  \textit{continuous-time} MDPs. However, there are many real-world applications where one {needs} to consider continuous-time MDPs.  {Examples include control of queueing
systems, control of infectious diseases, preventive maintenance and high frequency trading;  see, e.g., \cite{guo2009continuous2, piunovskiy2020continuous}, Chapter 11 of \cite{puterman2014markov} and the references therein. One may propose discretizing time upfront to turn a continuous-time control problem into a  discrete-time one and then apply the
existing results and algorithms. However, it is well known in the RL community that this
approach is very sensitive to time discretization and may perform poorly with small time steps; see e.g. \cite{tallec2019making}. In this paper, we study the learning of continuous-time MDPs (CTMDPs), carrying out theoretical analysis entirely in continuous time. We aim to understand fundamental performance limits of learning algorithms as well as to develop learning algorithms with theoretical guarantees.}

In RL, there are typically three different settings/criteria: (1) infinite-horizon discounted reward setting, (2) infinite-horizon average-reward (ergodic) setting, and (3) finite-horizon episodic setting. In this paper, we focus on the infinite-horizon average-reward setting, where the agent aims to learn from data (e.g., states, actions and rewards) a policy that optimizes the long-run average reward.
This setting has been heavily studied for learning in discrete-time MDPs; see e.g. \cite{bartlett2009regal, filippi2010optimism, fruit2018efficient, Jaksch2010, ortner2007logarithmic}. More importantly, for many CTMDPs such as the optimal control of queueing systems and their applications, it is useful to study performance measures such as the average number of customers in the system and optimize the long-run average reward/cost as the systems often run for a long time; see e.g. \cite{dai2020queueing}.

In this paper, we focus on theoretical analysis and algorithm design for learning in tabular CTMDPs with finite state space $\mathcal{S}$ and finite action space $\mathcal{A}$. Upon arriving at a state $s$, the agent takes an action $a$ and receives some reward. The system remains in state $s$ for a random holding time which is \textit{exponentially} distributed with some unknown rate {that depends on the state-action pair $(s,a)$. Then it jumps to another state with some unknown transition probability, at which another action is made. This series of events is repeated.} The performance of the agent's algorithm is often measured by the total \textit{regret} after some time period,
which is defined as the difference between the rewards collected by the algorithm during learning and the rewards of an optimal policy should the model parameters be completely known. If a learning algorithm achieves {\it sublinear} regret (in terms of the time horizon), then the average reward of the algorithm converges to the optimal average reward per unit time, while the convergence speed is determined by the specific growth rate of the regret.

While computational RL methods have been developed for CTMDPs as well as  the more general Semi-Markov Decision Processes (SMDPs) in the earlier literature \cite{bradtke1995reinforcement, das1999solving}, available theoretical results on regret bounds for continuous-time RL \rev{with discrete state spaces} are very limited. 
\rev{\cite{gao2022square} study RL for CTMDPs in
the finite-horizon episodic setting. They establish worst-case regret upper and lower bounds, both of the order of square-root on the number of episodes.}
\cite{fruit2017exploration} study learning in continuous-time (infinite-horizon) average-reward SMDPs which is more general than CTMDPs in that the holding times can follow general distributions. They adapt the UCRL2 algorithm by \cite{Jaksch2010} and show their algorithm achieves $O( \sqrt{ n})$ regret after $n$ decision steps (ignoring logarithmic factors and hiding dependency on other constants) in the infinite-horizon average reward setting. They also establish a minimax (worst-case) lower bound, in the sense  that for any algorithm there exists an SMDP such that the expected regret is of $O(\sqrt{n})$.

\rev{These $\sqrt{n}-$type regret bounds are {\it worst-case} performance bounds without exploiting
any special
structure of the underlying CTMDPs; hence they tend to be overly pessimistic about the complexity of the learning problem \cite{simchowitz2019non}. This motivates us to seek tighter regret bounds that are {\it instance-dependent}.}
The goal of the present paper is two fold. First, we
derive instance-dependent regret lower bounds that are logarithmic in the time horizon $T$
satisfied by {\it any} {``provably-efficient"} algorithm for learning average-reward CTMDPs. This provides the corresponding fundamental performance limits of learning algorithms.
Second, we design learning algorithms that indeed achieve the logarithmic rate. 
{While instance-dependent logarithmic regret bounds have been studied for learning discrete-time MDPs in the average-reward setting  (see, e.g., \cite{burnetas1997optimal, Jaksch2010, ok2018exploration}), there are no such results for the continuous-time average-reward RL to our best knowledge.  In this paper we fill this gap. Importantly, an extension from discrete-time to continuous-time incurs substantial difficulties and calls for a significantly different and involved analysis (see subsequent discussions for details). }

Specifically, we first establish (asymptotic) instance-dependent logarithmic regret lower bound for learning average-reward CTMDPs.
Since the set of decision epochs of CTMDPs are still discrete, we first prove a regret lower bound that is logarithmic in the number of decision steps.  {While the central  idea relies on a  change-of-measure argument similar to that in \cite{burnetas1997optimal}, considerable difficulties arise in applying the argument to the continuous-time case. First, in contrast with the discrete-time setting, the observations/data now contain the sequence of exponentially distributed holding times. Hence, the log-likelihood ratio of the observations under two different CTMDP models (with different holding time rates and transition probabilities), which is needed in the change-of-measure argument, is more complex to analyze compared with the one for discrete-time models  in \cite{burnetas1997optimal}. We overcome this difficulty by carefully exploiting the conditional independence of the holding time and the next state given the current state--action pair in CTMDPs. Second, the derivation of instance-dependent regret lower bounds focuses on so-called uniformly fast convergent (or uniformly good) algorithms whose definition in our setting is based on the regret defined in continuous time (see Equations~\eqref{eq:UF} and \eqref{eq:regret_T}). When we derive the regret lower bound in terms of the number of decision steps of a CTMDP, we need to show such algorithms are also uniformly fast convergent when we consider the regret in {\it discrete} decision epochs. Since the holding times of a CTMDP are random and not identically distributed in general, we apply stochastic comparisons of point processes \cite{whitt1981comparing} to overcome this difficulty. This technique also allows us to convert the regret lower bound in number of decisions to a logarithmic lower bound in the time horizon $T$. }

For regret upper bound and algorithm design, we propose the CT-UCRL (continuous-time upper confidence reinforcement learning) algorithm for learning CTMDP and establish a finite-time instance-dependent (or gap-dependent) upper bound for its regret. Our algorithm is a variant of the UCRL2 algorithm in \cite{Jaksch2010} for learning discrete-time MDPs although, again,  we need to deal with the additional holding times inherent in CTMDPs.
Our alogrithm is also related to the SMDP-UCRL algorithm for learning SMDPs in \cite{fruit2017exploration}. The main difference in the algorithm design is that we estimate the mean holding times with more refined estimators with tighter confidence bounds than the empirical mean  in \cite{fruit2017exploration}. Note that the holding times follow exponential distributions, which are \textit{unbounded and have heavier tails} than Gaussian distributions. As a consequence, the empirical mean holding times have exponential upper tails, leading to difficulties in the proof of logarithmic regret guarantees of learning algorithms, as is known in the multi-armed-bandit literature \cite{bubeck2013bandits}. {By using a refined estimator for the mean holding times,
we  prove that the CT-UCRL algorithm has a finite-time regret bound of $O(\frac{\log n}{g})$ after $n$ decision steps (ignoring dependance on other constants), where $g$ is an instance-dependent gap parameter. We also show that the CT-UCRL algorithm has a finite-time regret upper bound that is logarithmic in the time horizon $T$, again by using stochastic comparisons of point processes. }

{This result has two immediate implications. First, in our study of asymptotic instance-dependent regret lower bound, {\it a priori} it is not clear whether or not a uniformly fast convergent algorithm exists in learning average-reward CTMDPs. This result provides an affirmative answer to this question. Second, it implies that the logarithmic rate in the lower bound can indeed be achieved by our algorithm. While our algorithm may not be
asymptotically instance optimal (i.e., the regret upper bound matches the asymptotic regret lower bound as the horizon $T$ goes to infinity), it has a finite-time performance bound which is desirable in practice. This is in contrast with \cite{burnetas1997optimal} which designs an asymptotically optimal algorithm whose finite-time performance is however unclear in the discrete-time setting. }

{To prove the logarithmic regret upper bound of our proposed algorithm, we extend the approach of \cite{Jaksch2010} to the CTMDP setting. Once again, there are significant differences in our analysis compared with the discrete-time setting, due to exponential holding times and different Bellman optimality equations. In particular, we need several new ingredients in our regret analysis. First, we need a high probability confidence bound for mean holding times based on our estimator. Second, we need to analyze the solution procedure for finding the optimisitic CTMDP in the confidence region (see Equation~\eqref{eq:optimistic-model} in our algorithm). Finally, the regret incurred in each decision period depends on the length of the random holding time at that period, giving rise to some subtle difficulty in the regret analysis. To get around, we analyze an ``adjusted regret" corresponding to the case where the random holding time is replaced by its mean, which coincides with the actual regret in expectation. Such a technique is not required in the discrete-time setting. }

\rev{While our study focues on CTMDP with discrete state spaces, we conclude this introduction by mentioning a growing body of literature on RL for diffusion processes with continuous state spaces. A series of papers \cite{ JZ21, JZ22, jia2023q, Wang20} have formulated  RL for general diffusion processes and developed a martingale theoretical foundation for designing various  actor--critic and q-learning algorithms. \cite{Wangboyu2023} study RL for the continous-time optimal execution problem in finance and develop a finite-time convergence analysis of their offline actor-critic algorithm. In terms of regret bounds,
several recent papers \cite{basei2022logarithmic, guo2021reinforcement, szpruch2021exploration, szpruch2024optimal} study continuous-time RL for linear--quadratic/convex models and propose algorithms with sublinear regret upper bounds in the finite-horizon episodic setting.
For the infinite-horizon average reward setting, \cite{faradonbeh2023online, shirani2022thompson} study continuous-time linear-quadratic RL problems and propose algorithms with a square-root of time regret.
}

The remainder of the paper is organized as follows. In Section~\ref{sec:learnCTMDP}, we formulate the problem of learning in CTMDPs. In Section~\ref{sec:lowerbound} we present the result on the asymptotic regret lower bound, while in Section~\ref{sec:upperbound} we devise the CT-UCRL algorithm and establish a finite-time instance-dependent upper bound for its regret. Finally Section~\ref{sec:conclusion} concludes.  


\section{Formulation of Learning in CTMDPs} \label{sec:learnCTMDP}

We consider a CTMDP  with a finite state space $\mathcal{S}$ and a finite action space $\mathcal{A}$. \rev{We denote by $S$ and $A$ the cardinalities  of $\mathcal{S}$ and $\mathcal{A}$, respectively.}
Given a (deterministic, stationary, and Markov) {\it policy} $\pi$, which  is a map from $\mathcal{S}$ to $\mathcal{A}$, the process  moves as follows (see e.g. \cite{puterman2014markov,  serfozo1979equivalence}). At time $0,$ the system is at state $s_0 \in \mathcal{S} $ and the agent chooses an action $a_0=\pi(s_0) \in \mathcal{A}$ with a reward $r(s_0,a_0) \in [0,1]$  received as a consequence of the action. The system remains in state $s_0$ for a random holding time period $\tau_0$ that follows an exponential distribution with parameter $\lambda(s_0, a_0).$ Then it jumps to state $s_1 \in \mathcal{S}$ with transition probability $p(s_1 | s_0, a_0)$ at which  another action $a_1=\pi(s_1)$ is  made. This series of events is repeated indefinitely.   {Note that if action $a$ is chosen in state $s$, then the joint probability that the holding time in state $s$ is not greater than $t $ and the next state is $j$ is given by $ \left(1- e^{-\lambda(s, a) t}\right) \cdot p (j | s,a)$. }

A history $\omega_n$ of the CTMDP up to the $n$th decision epoch is denoted by $\omega_n \equiv (s_0, a_0, \tau_0, s_1, a_1,\tau_1 \cdots, s_{n-1}, a_{n-1},\tau_{n-1}, s_n)$, which is any feasible sequence of states, actions and holding times up to the $n$th decision epoch. Here, a major difference from  discrete-time MDPs is that the history now contains the holding times $\tau_0, \tau_1, \cdots,\tau_{n-1}$. Denote by $N(t)$ the total number of actions made up to (and including) time $t>0$.

We consider learning in an {\it unknown} CTMDP where the rates of exponential holding times $(\lambda(s,a))_{s \in \mathcal{S}, a \in \mathcal{A}}$ and
the transition probabilities $(p(\cdot |s, a))_{s \in \mathcal{S}, a \in \mathcal{A}}$ are unknown.
For simplicity we assume the functional form of the reward function $(r(s,a)))_{s \in \mathcal{S}, a \in \mathcal{A}}$ is known.\footnote{This assumption is not restrictive for the purpose of obtaining logarithmic regret bounds. For  lower bounds, it  can be relaxed following the discussion in Section~7 of  \cite{burnetas1997optimal} and \cite{ok2018exploration}. For  upper bounds, our proposed algorithm can be easily extended to incorporate the empirical estimates of the reward function following \cite{Jaksch2010}, without changing the logarithmic growth rate of the regret.} \rev{A {\it learning algorithm}, denoted by $\mathcal{G}$,  at each decision epoch $n \in \mathbb{N}$ generates a probability measure $\pi_n$ on $\mathcal{A}$ such that $\pi_n(\mathcal{A}| \omega_n) =1$, where $\pi_n$ is a possibly randomized law of selecting actions based on the entire history $\omega_n$. }
The goal is to minimize the expected regret to be defined shortly.

 Denote by $\mathcal{M} = ( p(\cdot |s, a) , \lambda(s,a))_{s \in \mathcal{S}, a \in \mathcal{A}} $ the unknown parameters of the CTMDP. For notational simplicity, if a CTMDP is described by the quadruplet $(\mathcal{S}, \mathcal{A}, \mathcal{M}, r)$ where $r=(r(s,a))_{s \in \mathcal{S}, a \in \mathcal{A}}$, it is referred to  as the CTMDP $\mathcal{M}$.\footnote{For simplicity, we only consider the `lump sum reward' $r(s,a)$ in the CTMDP. More generally, one can also incorporate the `running reward/cost', see Chapter 11 of \cite{puterman2014markov} for more details.}

We make two assumptions on the CTMDP $\mathcal{M}$ throughout the paper. The first one is on the unknown transition probability matrix, denoted by $P=( p(j|s, a) )_{s, j \in \mathcal{S}, a \in \mathcal{A}}$.
We are first given a family of  sets $\mathcal{S}^+(s,a)$ for all $s \in \mathcal{S}, a \in \mathcal{A}$. These sets are assumed to be known and they are independent of $\mathcal{M}$. 
Now define
{\small{
\begin{align*}
\Theta(s,a) = \left\{ q \in \mathbb{R}^{|\mathcal{S}|}:  \sum_{y \in \mathcal{S}} q(y)=1,  \text{$q(y)>0$, $ \forall \ y \in \mathcal{S}^+(s,a)$ and $q(y)=0,$ $\forall \ y \notin \mathcal{S}^+(s,a)$} \right\}.
\end{align*}
} }
Following  \cite{burnetas1997optimal} (see Assumption (A) therein), we make the following assumption on the transition probability matrix $P$. {While this assumption is fairly strong, it is needed in deriving explicit instance-dependent regret lower bounds in the literature on discrete time MDPs. Hence we inherit  such an assumption in our continuous-time setting.}

\begin{assumption} \label{assume-transition}
The transition probability vectors $p(\cdot| s, a) \in \Theta(s,a)$ for all $s \in \mathcal{S}, a \in \mathcal{A}$. In addition, the transition matrices $[p(j|s, \pi(s))]_{ s, j \in \mathcal{S} }$ are irreducible for all deterministic, stationary and Markov policies $\pi.$
\end{assumption}

Note that $p(\cdot| s, a) \in \Theta(s,a)$ implies that the support of the transition probability vector $p_s(a)$ is given by the known set $\mathcal{S}^+(s,a).$ \rev{The assumption that $\mathcal{S}^+(s,a)$ is known and independent of $\mathcal{M}$ is needed in proving our instance-dependent regret lower bound because the proof relies on change-of-measure arguments. In particular, to apply such arguments one needs to know $\mathcal{S}^+(s,a)$, the support of the next-state distribution, to study the Kullback--Leibler (KL) divergence between the transition probability distribution of the CTMDP $\mathcal{M}$  and that of an alternative CTMDP; see \eqref{eq:KLdiff}. This assumption of knowing $\mathcal{S}^+(s,a)$, however, is not needed in our algorithm design and the proof of a regret upper bound. For an example of CTMDPs satisfying this assumption,  consider the classical problem of optimal customer selection in a $M/M/n$ queue with $m$ customer classes in \cite{miller1969queueing}. It is formulated as a CTMDP with
state space $\mathcal{S} =\{0,1, \ldots, n\}$, where state $i$ means $i$ servers are free. For $i \ge 1,$ the possible action $a$ at state $i$ is any non-empty subset of $\{1, 2, \ldots, m\}$, representing which classes are to be admitted/served.  For $i= 0,$ the only admissible action is $\emptyset$. 
One can easily
see that $\mathcal{S}^+(i,a) = \{i-1, i+1\}$ for $0<i < n$, $\mathcal{S}^+(i,a) = \{i-1\}$ for $i=n$, and $\mathcal{S}^+(i,a) = \{ i+1\}$ for $i=0$, for an admissible action $a$.\footnote{In this example we allow the admissible action set to be state-dependent, for which the proof of the regret lower bound still holds  as studied in the discrete-time setting \cite{burnetas1997optimal}.}
}

The second assumption is on the rates of the holding times.

\begin{assumption} \label{assume-rate}
We assume there are two known constants $\lambda_{\min}, \lambda_{\max} \in (0,\infty)$, such that
\begin{align*}
\lambda_{\min} \le \lambda(s,a) \le \lambda_{\max}, \quad \text{for all $s \in \mathcal{S}, a \in \mathcal{A}$}.
\end{align*}
\end{assumption}

This assumption on bounded holding time rates is natural since the state and action spaces are both finite. \rev{In implementing our proposed learning algorithm, we need to specify {\it some} values of $\lambda_{\min}$ and $\lambda_{\max}$.} For practical applications and implementations, one may be able to have some rough estimates of bounds on the holding time rates based on knowledge about the problems at hand.

We denote by $\mathcal{H}$ the collection of CTMDPs that satisfy the above two assumptions.


 Given a CTMDP $\mathcal{M}$,  set
\begin{eqnarray} \label{eq:ergodic-control}
\rho^*(\mathcal{M})=\max_{\pi}  \left\{ \rho_{s_0}^{\pi}(\mathcal{M}) :=  \limsup_{T \rightarrow \infty} \frac{1}{T} \mathbb{E}_{s_0}^{\pi} \sum_{n=0}^{N(T) -1} r(s_n, a_n)  \right\},
\end{eqnarray}
where the maximum is over all policies under the assumption of full knowledge of $\mathcal{M}$'s parameters, and we recall that $N(T)$ denotes the total number of actions made up to time $T$. This quantity $\rho^*(\mathcal{M})$, or simply $\rho^*$,  is the (ground truth) optimal long-run average reward. For a CTMDP satisfying Assumptions~\ref{assume-transition} and \ref{assume-rate},
 there exist a function $h^*: \mathcal{S} \rightarrow \mathbb{R}$ and a constant $\rho^*$ that is independent of the initial state $s_0$ such that the following Bellman optimality equation holds for the average reward CTMDP \eqref{eq:ergodic-control}:
\begin{eqnarray} \label{eq:opt-eqn}
0  = \max_{a \in \mathcal{A}} \left\{r(s, a) - \rho^*/\lambda(s,a)+  \sum_{j \in \mathcal{S}} p(j|s, a) h^*(j) - h^*(s)  \right\},  \quad s \in \mathcal{S};
\end{eqnarray}
see, e.g., Chapter 11.5.3 of \cite{puterman2014markov}.
Here, $\rho^*$ is the optimal average reward per unit time given in \eqref{eq:ergodic-control}, and $h^*$ is called the bias function or the relative value function (defined up to an additive constant).
In addition, the maximizer of the right hand side of \eqref{eq:opt-eqn}, which is
a function of the state $s$, constitutes
a stationary, deterministic Markov policy that is optimal for problem \eqref{eq:ergodic-control}; see Theorem 11.4.6 and Chapter 11.5.3 of \cite{puterman2014markov}. We call this policy the optimal greedy policy. In the sequel, when we need to stress the dependence of $\rho^*$ and $h^*(s)$ on the underlying CTMDP $\mathcal{M}$, we will write them as $\rho^*(\mathcal{M})$ and $h^*(s; \mathcal{M})$ respectively.

 We measure the performance of a learning algorithm by the regret defined below.  
\begin{definition}\label{regret-def}
Given a learning algorithm $\mathcal{G}$ that generates action $a_n$ at state $s_n$,
 its regret  up to time $T$ is defined by
\begin{eqnarray} \label{eq:regret_T}
R_T^{\mathcal{G}} (s_0, \mathcal{M}) = T \rho^* - \sum_{n=0}^{N(T) -1} r(s_n, a_n),
\end{eqnarray}
where $\rho^*$ is given by \eqref{eq:ergodic-control} and $\sum_{n=0}^{N(T) -1} r(s_n, a_n)$ is the accumulated reward collected under $\mathcal{G}$.
\end{definition}

This notion of regret, studied by \cite{Jaksch2010} in the discrete-time setting, is defined against  the long-run average criterion $\rho^*$. As in \cite{Jaksch2010}, one can show that the gap between $T \rho^*$ and \rev{the optimal expected total reward of the true CTMDP (when it is known) with horizon length $T$ is bounded by $\max_{s, s' \in \mathcal{S} } \left( h^*(s) - h^*(s') \right) $, which is a finite constant independent of $T$. }

In the study of regret upper bounds, we will employ the well-known {\it uniformization} technique. Specifically, noting $\lambda(s,a)\le \lambda_{\max}$ for all $s,a$ under Assumption~\ref{assume-rate},
we rewrite equation~\eqref{eq:opt-eqn} as follows \cite[Chapter 11.5.3]{puterman2014markov}
\begin{eqnarray} \label{eq:uniform-eqn}
\check h^*(s)  = \max_{a \in \mathcal{A} } \left\{\check r(s, a)  -\check \rho^* +  \sum_{j \in \mathcal{S}} \check p(j|s, a) \check h^*(j)  \right\},
\end{eqnarray}
where ${ \check r(s, a) = r(s, a) \lambda(s,a)/\lambda_{\max} }$, $\check h^* = h^*$, $ \check p(j|s, a) = p(j|s, a) \lambda(s,a)/\lambda_{\max}$ for $j \ne s$, $ \check p(j|s, a) = 1- (1- p(j|s, a)) \lambda(s,a)/\lambda_{\max}$ for $j= s$ and $\check \rho^* = \rho^*/\lambda_{\max}$.  This is called the uniformization transformation. This transformation  leads to a new but equivalent  CTMDP where the decision times are determined by the arrival times of a homogeneous Poisson process with uniform rate $\lambda_{\max}$ that no longer depends on  the state. Moreover,  we recognize that  \eqref{eq:uniform-eqn} is the optimality equation for an average-reward {\it discrete-time} MDP with reward function $\check r$ and transition probabilities $(\check p(j|s, a))_{s,j,a}$. This implies that the original CTMDP model \eqref{eq:ergodic-control} is also equivalent to a (uniformized) discrete-time MDP model under the \textit{infinite-horizon} average-reward criterion.
 This being said, we can {\it not} directly apply the existing regret bound results for discrete-time MDPs to obtain regret bounds for the CTMDPs we consider. 
This is because we are ultimately interested in how the total regret of a learning algorithm grows as a function of time, whereas
on a \textit{finite horizon} the continuous-time and uniformized discrete-time MDPs are {\it not} equivalent -- indeed it is unclear how the gap between the two problems  scale as a function of time. This represents the major challenge we need to overcome in our regret analysis.

\section{Asymptotic \rev{Instance-Dependent} Regret Lower Bound} \label{sec:lowerbound}
We establish \rev{instance-dependent} regret lower bound for learning CTMDPs. 

\subsection{Preliminaries}\label{sec:LB-pre}
We first introduce some definitions that will be used later.

A learning algorithm $\mathcal{G}$ is called {\it uniformly fast convergent} (UF) if
 \begin{align}\label{eq:UF}
 \mathbb{E}_{s_0}[  R_T^{\mathcal{G}} (s_0, \mathcal{M}) ] = o(T^{\alpha})  \quad \text{as $T \rightarrow \infty$},
 \end{align}
 for any $\alpha>0, s_0 \in \mathcal{S}$ and any CTMDP $\mathcal{M} \in \mathcal{H}$, \rev{where for notational simplicity we use $\mathbb{E}_{s_0}$ to denote the expectation under the model $\mathcal{M}$, algorithm $\mathcal{G}$ and  starting state $s_0$}. Denote by $C_F$ the class of UF algorithms. {\it A priori} it is not clear whether or not the set $C_F$ is empty; however, we will develop an algorithm in Section~\ref{sec:upperbound} which shows that UF algorithms indeed exist in average-reward CTMDPs. On the other hand, a UF algorithm is defined  to perform uniformly well on {all} CTMDP instances, but may not perform arbitrarily well on a specific instance; so it is used to derive a lower bound (Theorem~\ref{thm:main}).

For a learning algorithm $\mathcal{G},$ define the $N-$th decision epoch $S_N = \sum_{i=0}^{N-2} \tau_i$ with the convention that $ \sum_{i=0}^{-1} \tau_i =0$, and define the expected regret up to $S_N$ by 
\begin{eqnarray} \label{eq:regret_SN}
\mathcal{R}^{\mathcal{G}}_N(s_0, \mathcal{M}) :=  \mathbb{E}_{s_0} [R_{S_N}^{\mathcal{G}} (s_0, \mathcal{M}) ]= \mathbb{E}_{s_0}[ \rho^* \cdot S_N - \sum_{n=0}^{N-1} r(s_n, a_n)],
\end{eqnarray}
where $R_T^{\mathcal{G}} (s_0, \mathcal{M})$ is defined in \eqref{eq:regret_T} for any $T>0.$

Let $O(s; \mathcal{M})$ be the set of maximizers of the right hand side of the equation~\eqref{eq:opt-eqn}. Define the ``suboptimality gap" of an action $a$ at state $s$ by
\begin{align} \label{eq:sub-gap}
\phi^*(s, a ; \mathcal{M}) = - r(s, a) + \rho^*/\lambda(s, a) -  \sum_{j \in \mathcal{S}}p (j|s, a) h^*(j) + h^*(s).
\end{align}
Clearly,  $\phi^*(s, a ; \mathcal{M}) =0$ if $a \in O(s; \mathcal{M})$, and
$\phi^*(s, a ; \mathcal{M})>0$ for any $s$ and $a \notin O(s; \mathcal{M})$.

\rev{In the following we introduce several terminologies and notations concerning critical state--action pairs that are natural extensions of their  discrete-time counterparts \cite{burnetas1997optimal}. They will be useful in deriving our instance-dependent regret lower bound.}
\begin{itemize}
\item For $(s, a)$ such that $a \notin O(s; \mathcal{M})$, let $\mathcal{\overline M} \in \mathcal{H} $ be the modification of the original model $\mathcal{M}$ in the following manner:
the transition probability distribution $p(\cdot|s,a)$ is modified to $\bar p(\cdot|s,a) \in \Theta(s, a)$, the rate of holding time $\lambda(s, a)$ is modified to $\bar \lambda(s, a) \in [\lambda_{\min}, \lambda_{\max}]$, and all the other parameters remain unchanged. When we need to stress the dependence of $\mathcal{\overline M}$ on $s, a,  \mathcal{ M}, \bar p(\cdot|s,a), \bar \lambda$, we will write it as $\mathcal{\overline M}(s, a; \mathcal{ M} , \bar p(\cdot|s,a), \bar \lambda(s, a))$.  

\item For $(s, a)$ with $a \notin O(s; \mathcal{M})$, denote by $\Delta \Theta(s, a ; \mathcal{M})$ the set of parameter values ($\bar p(\cdot|s,a),  \bar \lambda(s, a)$) that make action $a$ at state $s$ uniquely optimal under the modified model $\mathcal{\overline M}$. \rev{The uniqueness property of the optimal action is needed for technical reasons (see the proofs of  Lemma~\ref{lem: UF-N} and Proposition~\ref{lem: TN-LB}).} We also denote the set of critical state-action pairs  by $B(\mathcal{M}) = \{(s, a): a \notin O(s; \mathcal{M}), \Delta \Theta(s, a ; \mathcal{M}) \neq \emptyset \}$. \rev{Intuitively, a state-action pair $(s,a)$ is critical if $a$ is not optimal at state $s$ under the model $\mathcal{M}$, yet there exists a modified model $\mathcal{\overline M}$ under which $a$ becomes uniquely optimal  at $s$. }


\item  Define the `distance' (\rev{or difference}) between the two models $\mathcal{M}$ and $ \mathcal{ \overline M}$ at $(s, a)$:
\begin{align}\label{eq:KLdiff}
& \rev{KL_{\mathcal{M}, \mathcal{ \overline M}} (s,a) }    
 : = KL (p(\cdot|s,a), \bar p(\cdot|s,a) )  + KL(\tau(s, a), \bar \tau(s, a)) \\
&= \sum_{j \in \mathcal{S}} { p(j|s,a)} \log( p(j|s,a)/ \bar  p(j|s,a)) + \log(\lambda(s, a)/\bar \lambda(s, a)) + \bar \lambda(s, a) /\lambda(s, a) -1, \nonumber
\end{align}
where $\tau(s, a)$ and $\bar \tau(s, a)$ are the exponential distributions with parameters $\lambda(s, a)$ and $ \bar \lambda(s, a)$ respectively, and  $KL(\cdot, \cdot)$ is the KL divergence between two probability distributions. Note that $KL (p(\cdot|s,a), \bar p(\cdot|s,a) )$ is well defined since both $p(\cdot|s,a)$ and $\bar p(\cdot|s,a)$ are in the set $\Theta(s, a)$, and
 we use the fact that the KL divergence of two exponential distributions with parameters $\alpha$ and $\beta$  is  $\log(\alpha/\beta) + \beta/\alpha -1$.

\item Let
\rev{\begin{align} \label{eq:K}
K(s, a ; \mathcal{M})= \inf \{{KL_{\mathcal{M}, \mathcal{ \overline M}} (s,a) } : [ \bar p(\cdot| s, a), \bar \lambda(s, a)] \in \Delta \Theta(x, a ;\mathcal{M})  \},
\end{align}
which intuitively captures the minimum distance between the model $\mathcal{M}$ and the set of modified (or alternative) models $\mathcal{ \overline M}$.}
For fixed $(s, a)$, $K(s, a ; \mathcal{M})=0$ if $a \in O(s; \mathcal{M})$.
When $a \notin O(s; \mathcal{M})$, $0<K(s, a ; \mathcal{M})<\infty$
if $(s, a) \in B(\mathcal{M})$ and $K(s, a ; \mathcal{M})=\infty$ if $(s, a) \notin B(\mathcal{M})$. \rev{As we will see later in Proposition~\ref{lem: TN-LB}, for a critical state-action pair $(s,a)$, the number of visits to $(s,a)$ up to the $N-$th decision step under any UF algorithm is, asymptotically, at least $\frac{\log N}{K(s, a; \mathcal{M})}$. Hence, $K(s, a; \mathcal{M})$ is a measure of importance of the critical state-action pair $(s,a)$.}

\item Define
\begin{align} \label{eq:CM}
C(\mathcal{M})= \sum_{(s, a) \in B( \mathcal{M}) } \frac{\phi^*(s, a ; \mathcal{M})}{K(s, a ; \mathcal{M}) },
\end{align}
with the convention that the fraction is 0 if both the numerator and denominator are 0. This constant $C(\mathcal{M})$ provides an aggregate measure of importance of all the critical state-action pairs, which in turn characterizes the level of difficulty for  learning in the CTMDP $\mathcal{M}$ as we will see in Theorem~\ref{thm:main}. \rev{Intuitively, the suboptimality gap $\phi^*(s, a ; \mathcal{M})$ quantifies the regret incurred when one selects a sub-optimal action $a$ for state $s$, and $1/K(s, a; \mathcal{M})$ (multiplied by $\log N$) quantifies the minimal number of times of such an sub-optimal action has to be explored.}
Note that $C(\mathcal{M})=0$ only in the degenerate cases when  all policies are optimal (i.e. $a \in O(x;\mathcal{M})$ for all $a$) and/or none of the non-optimal actions can be made optimal by changing only its transition probability vector and the rate of holding times (i.e.
$ \Delta \Theta(s, a ; \mathcal{M}) = \emptyset$). 

\end{itemize}


\subsection{Main result}\label{sec:LB-main}
This section provides the main result on the asymptotic lower bound for the regret.

\begin{theorem}[Logarithmic instance-dependent regret lower bound]\label{thm:main}
For any learning algorithm $\mathcal{G}$ that is UF and any CTMDP $\mathcal{M} \in \mathcal{H}$, the expected regret up to $N-$th decision epoch satisfies
\begin{align} \label{eq:lower-bound-N}
\liminf_{N \rightarrow \infty} \frac{ \mathcal{R}_N^{\mathcal{G}}(s_0, \mathcal{M}) } { \log N}  \ge C(\mathcal{M}),
\end{align}
where the instance-dependent constant $C(\mathcal{M})$ is given in \eqref{eq:CM}. Moreover
\begin{align}\label{eq:coro-LB}
\liminf_{T \rightarrow \infty}  \frac{\mathbb{E}_{s_0}[R_T^{\mathcal{G}} (s_0, \mathcal{M})] } { \log T}  \ge C(\mathcal{M}).
\end{align}

\end{theorem}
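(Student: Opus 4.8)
The plan is to pass from the time‑indexed regret to the expected play counts of suboptimal state--action pairs via a Bellman‑based decomposition, to lower‑bound those counts by a Burnetas--Katehakis‑type change‑of‑measure argument adapted to the exponential holding times, and finally to transfer the bounds back to continuous time by stochastic comparison of point processes. The first ingredient is the decomposition: for any algorithm $\mathcal{G}$ and any $\mathcal{M}\in\mathcal{H}$,
\[
\mathcal{R}_N^{\mathcal{G}}(s_0,\mathcal{M}) \;=\; \sum_{(s,a):\,a\notin O(s;\mathcal{M})} \phi^*(s,a;\mathcal{M})\,\mathbb{E}_{\mathcal{M}}[N_N(s,a)] \;+\; O(1),
\]
where $N_N(s,a)$ counts visits to $(s,a)$ among the first $N$ decisions. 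This follows by writing $\rho^* S_N=\rho^*\sum_{n<N}\tau_n$, using $\mathbb{E}[\tau_n\mid\mathcal{F}_n]=1/\lambda(s_n,a_n)$ and substituting $\rho^*/\lambda(s_n,a_n)$ from the Bellman equation \eqref{eq:opt-eqn}, then telescoping the $h^*$‑terms (bounded since $\mathcal{S}$ is finite). More precisely, $M_n:=\sum_{k<n}\bigl(\rho^*/\lambda(s_k,a_k)-r(s_k,a_k)-\phi^*(s_k,a_k;\mathcal{M})-(h^*(s_{k+1})-h^*(s_k))\bigr)$ and $M'_n:=\sum_{k<n}(\tau_k-1/\lambda(s_k,a_k))$ are $(\mathcal{F}_n)$‑martingales, and $\rho^*S_n-\sum_{k<n}r(s_k,a_k)=\rho^*M'_n+M_n+\sum_{k<n}\phi^*(s_k,a_k;\mathcal{M})+h^*(s_n)-h^*(s_0)$; this identity is the workhorse for both parts of the theorem.

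Next I would show that a UF algorithm (in the sense \eqref{eq:UF}) is also uniformly fast in the number of decisions. Writing $\tilde R_m:=\rho^*S_m-\sum_{k<m}r(s_k,a_k)$ and $\nu(T):=\sup\{m:S_m\le T\}$, which is a stopping time because $\{\nu(T)\ge m\}=\{S_m\le T\}\in\mathcal{F}_m$, one has $\tilde R_{\nu(T)}-1\le R_T$; optional stopping in the martingale identity above at $m=\nu(T)$ (justified by the bounded conditional variances of the increments and $\mathbb{E}[N(T)]<\infty$) gives $\mathbb{E}_{\mathcal{M}}[\tilde R_{\nu(T)}]=\mathbb{E}_{\mathcal{M}}[\sum_{k<\nu(T)}\phi^*(s_k,a_k;\mathcal{M})]+O(1)$, so \eqref{eq:UF} yields $\mathbb{E}_{\mathcal{M}}[\sum_{k<\nu(T)}\phi^*(s_k,a_k;\mathcal{M})]=o(T^\alpha)$. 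Coupling each $\tau_n$ to an $\mathrm{Exp}(\lambda_{\min})$ variable through a shared uniform (so $\tau_n$ is pathwise dominated, using $\lambda(\cdot)\ge\lambda_{\min}$) shows the jump process stochastically dominates a rate‑$\lambda_{\min}$ Poisson process, hence $\nu(T)\ge\lambda_{\min}T/2$ off an event of probability $\le e^{-cT}$. Taking $T=2N/\lambda_{\min}$ and bounding that rare event crudely (both $\phi^*$ and the count are bounded by a multiple of $N$) yields $\mathbb{E}_{\mathcal{M}}[N_N(s,a)]=o(N^\alpha)$ for every suboptimal $(s,a)$ and then, by the decomposition, $\mathcal{R}_N^{\mathcal{G}}(s_0,\mathcal{M})=o(N^\alpha)$ for every $\alpha>0$ and every $\mathcal{M}\in\mathcal{H}$.

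The core is the change of measure. Fix a critical pair $(s,a)\in B(\mathcal{M})$ and $\epsilon>0$, choose $[\bar p(\cdot\mid s,a),\bar\lambda(s,a)]\in\Delta\Theta(s,a;\mathcal{M})$ with $I([p(\cdot\mid s,a),\lambda(s,a)],[\bar p(\cdot\mid s,a),\bar\lambda(s,a)])\le K(s,a;\mathcal{M})+\epsilon$, and let $\overline{\mathcal{M}}\in\mathcal{H}$ be the corresponding modified CTMDP, under which $a$ is the unique optimal action at $s$ (the support $\mathcal{S}^+(s,a)$ is unchanged, so $\overline{\mathcal{M}}$ still satisfies Assumption~\ref{assume-transition}). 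Since $\mathcal{M}$ and $\overline{\mathcal{M}}$ agree except at $(s,a)$ and, conditionally on $(s_k,a_k)=(s,a)$, the holding time $\tau_k$ and the next state $s_{k+1}$ are independent with laws $\mathrm{Exp}(\lambda(s,a))$ and $p(\cdot\mid s,a)$, the log‑likelihood ratio of $\omega_n$ collapses to $L_n=\sum_{k=1}^{N_n(s,a)}Z_k$, where the per‑visit increments $Z_k=\log\frac{\lambda(s,a)}{\bar\lambda(s,a)}+(\bar\lambda(s,a)-\lambda(s,a))\tau_{(k)}+\log\frac{p(s'_{(k)}\mid s,a)}{\bar p(s'_{(k)}\mid s,a)}$ are i.i.d.\ under $\mathbb{P}_{\mathcal{M}}$ with mean exactly $I([p(\cdot\mid s,a),\lambda(s,a)],[\bar p(\cdot\mid s,a),\bar\lambda(s,a)])$ and finite absolute mean. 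Let $C_N=\{N_N(s,a)<(1-\delta)\log N/(K(s,a;\mathcal{M})+\epsilon)\}$. Under $\overline{\mathcal{M}}$, applying the decomposition to $\overline{\mathcal{M}}$ together with the previous step gives $\sum_{a'\ne a}\mathbb{E}_{\overline{\mathcal{M}}}[N_N(s,a')]=o(N^\alpha)$; combined with a Markov‑chain concentration bound — under Assumption~\ref{assume-transition} the expected hitting time of $s$ is bounded uniformly over all history‑dependent policies, so $s$ is visited at least $\beta N$ times off an exponentially small event — this makes $\mathbb{P}_{\overline{\mathcal{M}}}(C_N)=o(N^{\alpha-1})+e^{-cN}=o(N^{-(1-\delta/2)})$ once $\alpha<\delta/2$. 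By the strong law, $\limsup_m \tfrac1m\max_{j\le m}\sum_{i\le j}Z_i\le K(s,a;\mathcal{M})+\epsilon$ a.s., so on $C_N$ one has $L_N\le(1-\delta/2)\log N$ with $\mathbb{P}_{\mathcal{M}}$‑probability tending to $1$; the change of measure $\mathbb{P}_{\overline{\mathcal{M}}}(C_N)=\mathbb{E}_{\mathcal{M}}[\mathbbm{1}_{C_N}e^{-L_N}]\ge N^{-(1-\delta/2)}\bigl(\mathbb{P}_{\mathcal{M}}(C_N)-o(1)\bigr)$ then forces $\mathbb{P}_{\mathcal{M}}(C_N)\to 0$, whence $\liminf_N\mathbb{E}_{\mathcal{M}}[N_N(s,a)]/\log N\ge(1-\delta)/(K(s,a;\mathcal{M})+\epsilon)$; sending $\delta\downarrow0$, $\epsilon\downarrow0$ gives $\liminf_N\mathbb{E}_{\mathcal{M}}[N_N(s,a)]/\log N\ge 1/K(s,a;\mathcal{M})$ for every $(s,a)\in B(\mathcal{M})$, and summing over $B(\mathcal{M})$ in the decomposition (discarding the remaining nonnegative suboptimal terms) yields \eqref{eq:lower-bound-N}.

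Finally, for \eqref{eq:coro-LB}, the bound $R_T\ge\tilde R_{\nu(T)}-1$ and the optional‑stopping identity give $\mathbb{E}_{\mathcal{M}}[R_T]\ge\sum_{(s,a)\in B(\mathcal{M})}\phi^*(s,a;\mathcal{M})\,\mathbb{E}_{\mathcal{M}}[N_{\nu(T)}(s,a)]+O(1)$; since $\nu(T)\ge\lfloor\lambda_{\min}T/2\rfloor$ off an exponentially small event (stochastic comparison again) and $N_m(s,a)$ is nondecreasing in $m$, one gets $\mathbb{E}_{\mathcal{M}}[N_{\nu(T)}(s,a)]\ge\mathbb{E}_{\mathcal{M}}[N_{\lfloor\lambda_{\min}T/2\rfloor}(s,a)]-o(1)\ge\bigl(1/K(s,a;\mathcal{M})-o(1)\bigr)\log T$ using \eqref{eq:lower-bound-N} and $\log(\lambda_{\min}T/2)\sim\log T$, which gives \eqref{eq:coro-LB}. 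I expect the main obstacle to lie precisely in the steps that have no discrete‑time analogue in \cite{burnetas1997optimal}: the log‑likelihood ratio now carries the holding‑time terms $(\bar\lambda-\lambda)\tau_{(k)}$, so one must verify that the per‑visit increments remain i.i.d.\ with mean exactly $I(\cdot,\cdot)$ (this is where the conditional independence of the holding time and the next state is used), and every passage between the decision count $N$ and the time horizon $T$ — needed both to transfer the UF property and to obtain \eqref{eq:coro-LB} — rests on stochastically comparing the state‑dependent jump process with homogeneous Poisson processes of rates $\lambda_{\min}$ and $\lambda_{\max}$. The uniform‑over‑adaptive‑policies visit‑count concentration used in the change‑of‑measure step is standard in spirit but must be stated with that uniformity.
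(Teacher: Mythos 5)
Your proposal follows essentially the same route as the paper: the same Bellman-based decomposition of $\mathcal{R}_N^{\mathcal{G}}$ into $\sum_{(s,a)}\phi^*(s,a;\mathcal{M})\,\mathbb{E}[T_N(s,a)]+O(1)$, the same transfer of the UF property from continuous time to decision epochs via stochastic comparison with a rate-$\lambda_{\min}$ exponential/Poisson process, the same change-of-measure argument in which the likelihood ratio collapses onto the visits to the single modified pair $(s,a)$ (using the conditional independence of holding time and next state), and the same Poisson-comparison step to convert the epoch-indexed bound into \eqref{eq:coro-LB}.

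The one deviation that matters is your handling of the visit count to the state $s$ inside the change-of-measure step. You take $C_N=\{T_N(s,a)<(1-\delta)\log N/(K(s,a;\mathcal{M})+\epsilon)\}$ \emph{without} intersecting it with $\{T_N(s)\ge\beta N\}$, and as a result you need $\mathbb{P}_{\overline{\mathcal{M}}}(T_N(s)<\beta N)$ to decay faster than $N^{-(1-\delta/2)}$, uniformly over history-dependent algorithms; you assert an exponentially small bound via a uniform-over-adaptive-policies hitting-time argument. That claim is provable under Assumption~\ref{assume-transition}, but it is strictly stronger than the lemma the paper relies on (Lemma~\ref{lem:irred}, i.e.\ Proposition~2 of \cite{burnetas1997optimal}, which gives only $o(1)$), and your sketch does not supply the proof. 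The paper sidesteps this entirely by folding $\{T_N(s)\ge\beta N\}$ into the event $E_N^\delta$: the $o(1)$ bound is then needed only under the \emph{true} model $\mathcal{M}$, while under $\overline{\mathcal{M}}$ the event $E_N^\delta$ already forces $T_N(s)-T_N(s,a)\ge\beta N-b_N$, so Markov's inequality together with Lemma~\ref{lem: UF-N} (applied with $\alpha=\delta/2$) yields the required $o(N^{\delta/2-1})$. Either prove your uniform exponential concentration lemma as a separate statement, or restructure the event as in the paper; with that adjustment, the remaining steps of your argument (the martingale/optional-stopping bookkeeping and the two $N\leftrightarrow T$ conversions) are sound and match the paper's analysis.
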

A proof of Theorem~\ref{thm:main} is deferred to Section~\ref{sec:LB-analysis}.

\begin{remark}
In deriving  the instance-dependent lower bound, the assumption that the holding time is exponential is used only at two places: Proof of \eqref{eq:coro-LB} and Proof of Lemma~\ref{lem: UF-N} in Section~\ref{sec:LB-analysis}. 
 In both places, the exponential distribution facilitates the comparison of counting processes and the conversion of the regret bound in terms of discrete decision epoch $N$ to the regret bound of continuous time $T$.  Specifically, together with Assumption~\ref{assume-rate} (which involves only bounds on mean holding times), the exponential assumption  allows us to bound the counting process associated with the number of decision epochs under any learning algorithm by tractable Poisson processes. On the
 other hand, it is possible to extend \eqref{eq:coro-LB} to general semi-Markov decision processes which extend CTMDPs by allowing general holding time distributions. In such an extension, one needs extra assumptions on the holding time distributions so that the aforementioned comparison of counting processes still works. For instance, one may need bounds on the failure rate of the residual life time distribution of the counting process associated with decision epochs under any learning algorithm \cite{whitt1981comparing}. This in turn requires more information about the distribution of the holding times beyond the mere bounds on the mean. Here we do not pursue this extension in this paper.
\end{remark}

\subsection{Proof of Theorem~\ref{thm:main}} \label{sec:LB-analysis}
In this section we prove Theorem~\ref{thm:main}.
Given a learning algorithm $\mathcal{G},$ denote by $T_N(s, a)$ the number of occurrences of the state-action pair $(s, a)$ up to the $N-$th decision epoch, i.e. $T_N(s, a) = \sum_{n=0}^{N-1} 1_{s_n =s, a_n =a}$. 
We need several technical results for the proof of Theorem~\ref{thm:main}. For the ease of notation, unless specified otherwise, we use $\mathbb{P}_{s_0}$ and $\mathbb{E}_{s_0}$ to denote the probability and expectation under the model $\mathcal{M},$ algorithm $\mathcal{G}$ and initial state $s_0$ respectively.

\begin{proposition} \label{lem:decomp}
For any algorithm $\mathcal{G}$, the expected regret up to the $N-$th decision epoch satisfies,
\begin{align} \label{eq:regret-decomp}
\mathcal{R}_N^{\mathcal{G}}(s_0, \mathcal{M}) = \sum_{s \in \mathcal{S}} \sum_{a \notin O(s, \mathcal{M})}\mathbb{E}_{s_0}[T_N(s, a)] \cdot \phi^*(s, a ; \mathcal{M}) + O(1),  \quad \text{as $N \rightarrow \infty,$}
\end{align}
where $\phi^*(s, a ; \mathcal{M})$ is given in \eqref{eq:sub-gap}.
\end{proposition}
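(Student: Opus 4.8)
The plan is to decompose the expected regret up to the $N$th decision epoch into a sum over state--action pairs weighted by their suboptimality gaps, mirroring the discrete-time argument in \cite{burnetas1997optimal} but with the holding-time rates folded into the gaps via \eqref{eq:sub-gap} and \eqref{eq:opt-eqn}. First I would recall from \eqref{eq:regret_SN} that $\mathcal{R}_N^{\mathcal{G}}(s_0, \mathcal{M}) = \mathbb{E}_{s_0}[\rho^* S_N - \sum_{n=0}^{N-1} r(s_n,a_n)]$, where $S_N = \sum_{n=0}^{N-1}\tau_n$ is the time of the $N$th decision epoch. Since each holding time $\tau_n$ is exponential with rate $\lambda(s_n,a_n)$ conditional on $(s_n,a_n)$, by the tower property $\mathbb{E}_{s_0}[\tau_n \mid s_n, a_n] = 1/\lambda(s_n,a_n)$, so $\mathbb{E}_{s_0}[S_N] = \mathbb{E}_{s_0}\big[\sum_{n=0}^{N-1} 1/\lambda(s_n,a_n)\big]$. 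Hence
\begin{align*}
\mathcal{R}_N^{\mathcal{G}}(s_0,\mathcal{M}) = \mathbb{E}_{s_0}\left[\sum_{n=0}^{N-1}\left(\rho^*/\lambda(s_n,a_n) - r(s_n,a_n)\right)\right],
\end{align*}
and grouping the sum by the value of $(s_n,a_n)$ gives $\mathcal{R}_N^{\mathcal{G}} = \sum_{s,a} \mathbb{E}_{s_0}[T_N(s,a)]\big(\rho^*/\lambda(s,a) - r(s,a)\big)$.

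Next I would rewrite the per-pair coefficient $\rho^*/\lambda(s,a) - r(s,a)$ using the Bellman optimality equation \eqref{eq:opt-eqn}. By definition \eqref{eq:sub-gap}, $\phi^*(s,a;\mathcal{M}) = -r(s,a) + \rho^*/\lambda(s,a) - \sum_j p(j|s,a)h^*(j) + h^*(s)$, so
\begin{align*}
\rho^*/\lambda(s,a) - r(s,a) = \phi^*(s,a;\mathcal{M}) + \sum_{j\in\mathcal{S}} p(j|s,a) h^*(j) - h^*(s).
\end{align*}
Substituting, the regret splits into a "gap" term $\sum_{s,a}\mathbb{E}_{s_0}[T_N(s,a)]\,\phi^*(s,a;\mathcal{M})$ — which, since $\phi^* = 0$ on $O(s;\mathcal{M})$, equals $\sum_s\sum_{a\notin O(s;\mathcal{M})}\mathbb{E}_{s_0}[T_N(s,a)]\,\phi^*(s,a;\mathcal{M})$ — plus a "telescoping" term $\mathbb{E}_{s_0}\big[\sum_{n=0}^{N-1}\big(\sum_j p(j|s_n,a_n)h^*(j) - h^*(s_n)\big)\big]$. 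The remaining task is to show the telescoping term is $O(1)$ as $N\to\infty$.

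The main obstacle — though it is the standard one — is controlling this residual term. I would write $\sum_j p(j|s_n,a_n)h^*(j) = \mathbb{E}_{s_0}[h^*(s_{n+1}) \mid \mathcal{F}_n]$, where $\mathcal{F}_n$ is the history $\omega_n$ together with $a_n$, so that $M_n := h^*(s_{n+1}) - \sum_j p(j|s_n,a_n)h^*(j)$ is a martingale difference sequence; then $\sum_{n=0}^{N-1}\big(\sum_j p(j|s_n,a_n)h^*(j) - h^*(s_n)\big) = h^*(s_0) - h^*(s_N) - \sum_{n=0}^{N-1} M_n$. Taking expectations, the martingale part vanishes (or is handled by optional stopping, using that $h^*$ is bounded because $\mathcal{S}$ is finite), leaving $\mathbb{E}_{s_0}[h^*(s_0) - h^*(s_N)]$, which is bounded in absolute value by $2\max_s|h^*(s)| < \infty$. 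This gives the $O(1)$ term and completes the proof. One technical point to be careful about is the interchange of expectation and the infinite-telescoping bookkeeping — since $N$ is deterministic here (not a stopping time), the martingale sum telescopes cleanly and $\mathbb{E}_{s_0}[M_n]=0$ for each $n$, so no subtle optional-stopping hypotheses are actually needed; boundedness of $h^*$ suffices throughout.
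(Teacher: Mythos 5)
Your proof is correct and takes essentially the same route as the paper: both decompose the per-step expected regret as $\phi^*(s_n,a_n;\mathcal{M})$ plus a bias-function increment (using $\mathbb{E}[\tau_n\mid s_n,a_n]=1/\lambda(s_n,a_n)$) and absorb the telescoping remainder into $O(1)$ via boundedness of $h^*$, the paper merely packaging the telescoping as a backward recursion $D_N^{\mathcal{G}}(s_0,\mathcal{M})=\mathbb{E}_{s_0}[\phi^*(s_0,a_0;\mathcal{M})]+\mathbb{E}_{s_0}[D_{N-1}^{\mathcal{G}}(s_1,\mathcal{M})]$ rather than as an explicit martingale-difference sum. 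The only blemishes are harmless: the telescoped term should read $h^*(s_N)-h^*(s_0)-\sum_{n=0}^{N-1}M_n$ (a sign slip), and your $S_N=\sum_{n=0}^{N-1}\tau_n$ is offset by one holding time from the paper's $\sum_{i=0}^{N-2}\tau_i$, both differences being swallowed by the $O(1)$ term.
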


\begin{proof} 
We adapt proof of Proposition 1 in \cite{burnetas1997optimal} to the setting of CTMDPs.
Write
\begin{eqnarray*}
D_N^{\mathcal{G}}(s_0, \mathcal{M}) = \mathcal{R}_N^{\mathcal{G}}(s_0, \mathcal{M}) +h^*(s_0; \mathcal{M}) = \mathbb{E}_{s_0}[ \rho^* \cdot S_N - \sum_{n=0}^{N-1} r(s_n, a_n)] + h^*(s_0; \mathcal{M}),
\end{eqnarray*}
where $h$ is the bias function in the average reward optimality equation \eqref{eq:opt-eqn} and $S_N = \sum_{i=0}^{N-2} \tau_i$ with the convention that $ \sum_{i=0}^{-1} \tau_i =0$.
Then we can compute
\begin{align*}
D_N^{\mathcal{G}}(s_0,  \mathcal{M}) & = \mathbb{E}_{s_0}\left(   \mathbb{E}_{s_0}[  \rho^* \cdot S_N - \sum_{n=0}^{N-1} r(s_n, a_n)  |(a_0, s_1)] \right)  + h^*(s_0;  \mathcal{M}) \\
&=   \mathbb{E}_{s_0} \left(  \frac{\rho^*}{\lambda(s_0,  a_0) }  -  r(s_0, a_0)  + \mathbb{E}_{s_0}[  \rho^* \cdot \sum_{i=1}^{N-2} \tau_i - \sum_{n=1}^{N-1} r(s_n, a_n)  |(a_0, s_1)] \right)  \\
& \qquad  + h^*(s_0;  \mathcal{M}) \\
& =  \mathbb{E}_{s_0}[  \phi^*(s_0, a_0;  \mathcal{M})] +  \mathbb{E}_{s_0} [D_{N-1}^{\mathcal{G}}(s_1,  \mathcal{M})  ],
\end{align*}
where 
$\tau_i$ is the holding time at a state after the $(i+1)-$th decision epoch, and the last equality follows by adding and subtracting the same quantity
$ \mathbb{E}_{s_0}(  \mathbb{E}_{(a_0, s_1)} [h^*(s_1; \mathcal{M}) ] ) $ from the second last line of the above equation. It then follows from the above recursion that
\begin{align*}
D_N^{\mathcal{G}}(s_0, \mathcal{M}) = \sum_{n=0}^{N-1} \mathbb{E}_{s_0}[  \phi^*(s_n, a_n;\mathcal{M})]  = \sum_{s \in \mathcal{S}} \sum_{a \notin O(s,\mathcal{M})} \mathbb{E}_{s_0}[T_N(s, a)] \phi^*(s, a ; \mathcal{M}),
\end{align*}
where we use the fact that $\phi^*(s, a ; \mathcal{M})=0$ for $a \in O(s; \mathcal{M})$ in the last equality. Because the state space is finite, we know $\sup_{s_0 \in\mathcal{S}} |h^*(s_0; \mathcal{M})|$ is a finite constant which is independent of $N$.  Hence we obtain \eqref{eq:regret-decomp} and the proof is complete.
\end{proof}

\begin{proposition}\label{lem: TN-LB}
For any UF algorithm $\mathcal{G}$ and critical state-action pair $(s, a) \in B(\mathcal{M})$,
\begin{align*}
\liminf_{N \rightarrow \infty} \mathbb{E}_{s_0}[T_N(s, a)] / \log N \ge \frac{1} {K(s, a ; \mathcal{M})},
\end{align*}
where $K(s, a ; \mathcal{M})$ is given in \eqref{eq:K}.
\end{proposition}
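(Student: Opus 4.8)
The plan is to carry over the change-of-measure argument of \cite{burnetas1997optimal} (in the spirit of Lai--Robbins) to the continuous-time model. Fix a critical pair $(s,a)\in B(\mathcal{M})$ and $\epsilon>0$. Since $\Delta\Theta(s,a;\mathcal{M})\neq\emptyset$, choose $[\bar p(\cdot|s,a),\bar\lambda(s,a)]\in\Delta\Theta(s,a;\mathcal{M})$ with $I([p(\cdot|s,a),\lambda(s,a)],[\bar p(\cdot|s,a),\bar\lambda(s,a)])<K(s,a;\mathcal{M})+\epsilon$, and let $\overline{\mathcal{M}}=\overline{\mathcal{M}}(s,a;\mathcal{M},\bar p(\cdot|s,a),\bar\lambda(s,a))\in\mathcal{H}$ be the corresponding modified CTMDP, in which $a$ is the unique optimal action at state $s$. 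Set $m_N=(1-\epsilon)\log N/(K(s,a;\mathcal{M})+\epsilon)$ and $A_N=\{T_N(s,a)<m_N\}$. It suffices to show $\mathbb{P}_{s_0}(A_N)\to0$ as $N\to\infty$: indeed $\mathbb{E}_{s_0}[T_N(s,a)]\ge m_N\,\mathbb{P}_{s_0}(A_N^c)$, so $\liminf_N\mathbb{E}_{s_0}[T_N(s,a)]/\log N\ge(1-\epsilon)/(K(s,a;\mathcal{M})+\epsilon)$, and letting $\epsilon\downarrow0$ gives the claim.

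To prove $\mathbb{P}_{s_0}(A_N)\to0$ I would use a likelihood-ratio comparison between $\mathcal{M}$ and $\overline{\mathcal{M}}$. Since the two models agree except in the jump kernel and holding-time rate at $(s,a)$, the two laws of the history $\omega_N$ are mutually absolutely continuous (the support condition built into $\Theta(s,a)$ ensures this), the actions $a_i$ are deterministic functions of the past history and drop out, and
\begin{align*}
L_N&:=\log\frac{d\mathbb{P}^{\mathcal{M}}_{s_0}}{d\mathbb{P}^{\overline{\mathcal{M}}}_{s_0}}(\omega_N)\\
&=\sum_{i=0}^{N-1}\mathbf{1}_{\{(s_i,a_i)=(s,a)\}}\Bigl(\log\tfrac{p(s_{i+1}|s,a)}{\bar p(s_{i+1}|s,a)}+\log\tfrac{\lambda(s,a)}{\bar\lambda(s,a)}+(\bar\lambda(s,a)-\lambda(s,a))\tau_i\Bigr).
\end{align*}
Using the conditional independence of $\tau_i$ and $s_{i+1}$ given $(s_i,a_i)$ together with the exponential density of $\tau_i$, the increments collected over the successive visits to $(s,a)$ form an i.i.d.\ sequence $(\xi_k)$ with $\mathbb{E}_{s_0}[\xi_k]=I([p(\cdot|s,a),\lambda(s,a)],[\bar p(\cdot|s,a),\bar\lambda(s,a)])<K(s,a;\mathcal{M})+\epsilon$, so that $L_N=\sum_{k=1}^{T_N(s,a)}\xi_k$. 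By the strong law of large numbers $W:=\sup_{m\ge0}\bigl(\sum_{k=1}^m\xi_k-m(K(s,a;\mathcal{M})+\epsilon)\bigr)<\infty$ $\mathbb{P}_{s_0}$-a.s.; hence on $A_N$ one has $L_N\le m_N(K(s,a;\mathcal{M})+\epsilon)+W<(1-\epsilon)\log N+W\le(1-\tfrac\epsilon2)\log N$ once $W\le\tfrac\epsilon2\log N$, an event of probability $\to1$. A change of measure then gives
\begin{align*}
\mathbb{P}_{s_0}\bigl(A_N\cap\{L_N\le(1-\tfrac\epsilon2)\log N\}\bigr)=\mathbb{E}^{\overline{\mathcal{M}}}_{s_0}\bigl[e^{L_N}\mathbf{1}_{A_N\cap\{L_N\le(1-\epsilon/2)\log N\}}\bigr]\le N^{1-\epsilon/2}\,\mathbb{P}^{\overline{\mathcal{M}}}_{s_0}(A_N),
\end{align*}
so $\mathbb{P}_{s_0}(A_N)\le N^{1-\epsilon/2}\,\mathbb{P}^{\overline{\mathcal{M}}}_{s_0}(A_N)+o(1)$, and it remains to show $\mathbb{P}^{\overline{\mathcal{M}}}_{s_0}(A_N)=o(N^{-(1-\epsilon/2)})$.

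For that bound I would argue that under $\overline{\mathcal{M}}$ the pair $(s,a)$ is played linearly often. Since $a$ is uniquely optimal at $s$ in $\overline{\mathcal{M}}$, we have $\delta:=\min_{a'\ne a}\phi^*(s,a';\overline{\mathcal{M}})>0$; and since $\mathcal{G}$ is UF it is also uniformly fast convergent when the regret is measured in decision epochs (transferring the bound from the continuous-time horizon $T$ to the epoch count $N$ via Assumption~\ref{assume-rate} and a stochastic comparison of the decision-epoch point process with Poisson processes, cf.\ Lemma~\ref{lem: UF-N}), so $\mathcal{R}^{\mathcal{G}}_N(s_0,\overline{\mathcal{M}})=o(N^\alpha)$ for every $\alpha>0$. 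Proposition~\ref{lem:decomp} then gives $\sum_{a'\ne a}\mathbb{E}^{\overline{\mathcal{M}}}_{s_0}[T_N(s,a')]\le\delta^{-1}(\mathcal{R}^{\mathcal{G}}_N(s_0,\overline{\mathcal{M}})+O(1))=o(N^\alpha)$. Moreover, Assumption~\ref{assume-transition} (irreducibility of every deterministic stationary chain forces a uniformly bounded worst-case expected hitting time of $s$, hence geometric inter-visit tails) yields constants $c_0,c_1>0$, independent of $\mathcal{G}$, such that the number of visits $N_s(N)$ to $s$ up to epoch $N$ obeys $\mathbb{P}^{\overline{\mathcal{M}}}_{s_0}(N_s(N)\ge c_0N)\ge1-e^{-c_1N}$. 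Since $T_N(s,a)=N_s(N)-\sum_{a'\ne a}T_N(s,a')$ we obtain $\mathbb{E}^{\overline{\mathcal{M}}}_{s_0}[N_s(N)-T_N(s,a)]=o(N^\alpha)$, and on $A_N\cap\{N_s(N)\ge c_0N\}$ one has $N_s(N)-T_N(s,a)\ge c_0N-m_N\ge\tfrac{c_0}2N$ for $N$ large; Markov's inequality then gives $\mathbb{P}^{\overline{\mathcal{M}}}_{s_0}(A_N)\le\tfrac{2}{c_0N}\mathbb{E}^{\overline{\mathcal{M}}}_{s_0}[N_s(N)-T_N(s,a)]+e^{-c_1N}=o(N^{\alpha-1})$. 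Choosing $\alpha=\epsilon/4$ makes $N^{1-\epsilon/2}\mathbb{P}^{\overline{\mathcal{M}}}_{s_0}(A_N)=o(N^{-\epsilon/4})\to0$, hence $\mathbb{P}_{s_0}(A_N)\to0$, completing the argument.

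I expect the likelihood-ratio step to be the main obstacle: unlike in the discrete-time analysis of \cite{burnetas1997optimal}, $L_N$ now contains the exponential holding times, and one must verify carefully — using the conditional independence of $\tau_i$ and $s_{i+1}$ given $(s_i,a_i)$ and the explicit exponential density — that the per-visit increments $\xi_k$ are i.i.d.\ with mean exactly $I([p(\cdot|s,a),\lambda(s,a)],[\bar p(\cdot|s,a),\bar\lambda(s,a)])$ and admit an a.s.\ finite maximal partial-sum overshoot $W$. The transfer of the UF property from $T$ to $N$ and the linear-growth-with-concentration of visits to $s$ are, respectively, an instance of the point-process comparison machinery used elsewhere in the paper and a routine consequence of Assumption~\ref{assume-transition}.
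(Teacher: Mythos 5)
Your proposal is correct and follows the same change-of-measure strategy as the paper: pick $[\bar p(\cdot|s,a),\bar\lambda(s,a)]\in\Delta\Theta(s,a;\mathcal{M})$ with divergence close to $K(s,a;\mathcal{M})$, observe that the log-likelihood ratio of $\mathcal{M}$ against $\overline{\mathcal{M}}$ collapses to an i.i.d.\ sum over the visits to $(s,a)$ with mean $I$, control it by the SLLN, change measure on the low-count event, and absorb the resulting $N^{1-\epsilon/2}$ factor using the fact that a UF algorithm plays actions other than $a$ at $s$ only $o(N^{\alpha})$ times under $\overline{\mathcal{M}}$ (your transfer of the UF property from $T$ to $N$ plus Proposition~\ref{lem:decomp} reproduces exactly the paper's Lemma~\ref{lem: UF-N}). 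Two organizational differences deserve comment. First, you control the likelihood ratio via the a.s.\ finite overshoot $W$ of the negative-drift walk $\sum_{k\le m}\xi_k-m(K+\epsilon)$, while the paper uses the maximal-average bound in Lemma~\ref{lem:LR-SLLN}(ii); these are interchangeable consequences of the SLLN, though in either case one should (as the paper does with $\mathbf{P}_{(p(\cdot|s,a),\lambda(s,a))}$) realize the per-visit increments as an infinite auxiliary i.i.d.\ sequence, since $(s,a)$ need not be visited infinitely often under $\mathcal{G}$ and $\mathcal{M}$.

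Second, and more substantively: because your event $A_N$ does not carry a linear-visit condition, your bound on $\mathbb{P}^{\overline{\mathcal{M}}}_{s_0}(A_N)$ requires the visits to $s$ under the \emph{modified} model to be linear in $N$ with failure probability decaying faster than $N^{-1}$; you assert an exponential bound $\mathbb{P}^{\overline{\mathcal{M}}}_{s_0}(N_s(N)<c_0N)\le e^{-c_1N}$ as ``routine.'' This is genuinely stronger than the paper's Lemma~\ref{lem:irred}, which yields only $o(1)$ and is invoked only under the true model $\mathcal{M}$: the paper sidesteps your claim by building $\{T_N(s)\ge\beta N\}$ into the event $E_N^{\delta}$, so that under $\overline{\mathcal{M}}$ the inclusion $E_N^{\delta}\subseteq\{T_N(s)-T_N(s,a)\ge\beta N-b_N\}$ lets Markov's inequality do all the work, with no concentration needed under $\overline{\mathcal{M}}$. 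Your stronger claim is in fact provable from Assumption~\ref{assume-transition}: a dynamic-programming argument shows that if some history-dependent action selection could avoid $s$ for arbitrarily long with non-vanishing probability, one could extract a deterministic stationary policy with a closed set excluding $s$, contradicting irreducibility; hence the avoidance probability over some $K$ steps is at most $1/2$ uniformly in the starting state and the policy, which gives uniform geometric inter-visit tails and the Chernoff-type bound you need (mere boundedness of the expected hitting time would not suffice without this uniformity). So there is no gap in substance, but this is the one step you assert rather than prove, and adopting the paper's event structuring would let you dispense with it entirely.
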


The proof of Proposition~\ref{lem: TN-LB} is long and it requires several intermediate lemmas.
The essential  idea relies on a  change-of-measure argument similar to that in \cite{burnetas1997optimal}, although considerable difficulty arises in applying the argument to the continuous-time case due to the inherent random holding time. In the CTMDP setting, the key observation is the following.
The likelihood ratio of the two models $\mathcal{M}$ and $\mathcal{\overline M}$ given the history $\omega_N$ up to $N$-th decision epoch is given by 
\begin{align} \label{eq:LR1}
L(\mathcal{M} , \mathcal{\overline M}; \omega_N) = \mathbb{P}^{\mathcal{M}}_{s_0} (s_0, a_0, \tau_0, s_1, \ldots, \tau_{N-1}, s_N) /  \mathbb{P}^{ \mathcal{\overline M}}_{s_0} (s_0, a_0, \tau_0, s_1, \ldots, \tau_{N-1}, s_N).
\end{align}
 We can compute
\begin{align*}
&\mathbb{P}^{\mathcal{M}}_{s_0} (s_0, a_0,\tau_0, s_1, \ldots, \tau_{N-1}, s_N) \\
&= \prod_{k=0}^{N-1} \mathbb{P}^{\mathcal{M}}_{s_0} (a_k| s_0, a_0, \tau_0, s_1, \ldots, \tau_{k-1}, s_k) \cdot \mathbb{P}^{\mathcal{M}}_{s_0}( \tau_k |s_k, a_k) \mathbb{P}^{\mathcal{M}}_{s_0} ( s_{k+1} |s_k, a_k) .
\end{align*}
Since the algorithm $\mathcal{G}$ selects actions only using the history, the action $a_k$ is taken with the same probability given the history under $\mathcal{M}$ and $\mathcal{\overline M}.$
Hence, we have
\begin{align} \label{eq:LR2}
L(\mathcal{M} , \mathcal{\overline M}; \omega_N) & =  \prod_{k=0}^{N-1} \frac{ \mathbb{P}^{\mathcal{M}}_{s_0}  ( \tau_k |s_k, a_k) \cdot \mathbb{P}^{\mathcal{M}}_{s_0} ( s_{k+1} |s_k, a_k) }{ \mathbb{P}^{ \mathcal{\overline M}}_{s_0} ( \tau_k |s_k, a_k) \cdot \mathbb{P}^{ \mathcal{\overline M}}_{s_0} ( s_{k+1} |s_k, a_k) }.
\end{align}
Let the discrete random variable $Z_j(s, a) \in \mathcal{S}$ denote the state visited immediately after the $j-$th occurrence of $(s, a),$ and  $t_j(s, a)$ the holding time at state $s$ after the $j-$th occurrence of $(s, a).$
For $k \ge 0,$ $q, \bar q \in \Theta(s, a)$ and $\lambda , \bar \lambda \in [\lambda_{\min}, \lambda_{\max}],$ define
\begin{align} \label{eq:Lambda}
\Lambda_k ( [q, \lambda], [ \bar  q, \bar \lambda ] ) = \prod_{j=1}^k \frac{q_{Z_j(s, a) } }{\bar q_{Z_j(s, a) }} \cdot \frac{\lambda e^{-\lambda t_j(s, a)}}{\bar \lambda e^{-\bar \lambda t_j(s, a)} },
\end{align}
representing the likelihood ratio of the two models $\mathcal{M}$ and $\mathcal{\overline M}$ corresponding to the history of transitions out of state $s$ under action $a$.  Finally,  denote by $\mathbf{P}_{(p(\cdot|s,a), \lambda(s, a))}$ the probability measure generated by $(Z_j(s, a), t_j(s, a))_{j \ge 1}$, with $Z_j(s, a)$ following distribution $p(\cdot|s,a)$ and $t_j(s, a)$ following an exponential distribution with rate $\lambda(s, a).$

\begin{lemma} \label{lem:LR-SLLN}
We have
\begin{itemize}
\item [(i)] If $\mathcal{\overline M} = \mathcal{\overline M}(s, a; \mathcal{ M} , \bar p(\cdot|s,a), \bar \lambda(s, a)) $, then
\begin{align} \label{eq:LR3}
L(\mathcal{M} , \mathcal{\overline M}; \omega_N)= \Lambda_{ T_N(s, a)}([p(\cdot|s,a), \lambda(s, a)], [ \bar p(\cdot|s,a), \bar \lambda(s, a)] ),
\end{align}
 where
$ T_N(s, a)$ is the number of occurrences of the state-action pair $(s, a)$ up to the $N-$th decision epoch. 
\item [(ii)] Let $(b_n)$ be an increasing sequence of positive constants with $b_n \rightarrow \infty$ as $n \rightarrow \infty.$ Then, as $n \rightarrow \infty$, for any $c>0.$
\begin{align*}
& \mathbf{P}_{(p(\cdot|s,a), \lambda(s, a))} \biggl[ \max_{k \le \floor{b_n}} \log \Lambda_k / \floor{b_n}  > \\
&   \qquad \qquad \qquad \qquad (1+ c) \cdot I( [p(\cdot|s,a), \lambda(s, a)], [ \bar p(\cdot|s,a), \bar \lambda(s, a)])  \biggr] = o(1),
\end{align*}
\end{itemize}
\end{lemma}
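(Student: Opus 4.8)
The plan is to treat the two parts separately: part (i) is a bookkeeping identity obtained by collapsing the telescoping product in \eqref{eq:LR2}, while part (ii) is a maximal form of the strong law of large numbers for the random walk $k\mapsto\log\Lambda_k$.

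\textbf{Part (i).} Starting from \eqref{eq:LR2}, I would use that $\mathcal{\overline M}$ differs from $\mathcal{M}$ only in the pair of parameters $p(\cdot|s,a)\mapsto\bar p(\cdot|s,a)$, $\lambda(s,a)\mapsto\bar\lambda(s,a)$ attached to the single state--action pair $(s,a)$. Hence, for every index $k$ with $(s_k,a_k)\ne(s,a)$, the holding-time density and the transition probability at $(s_k,a_k)$ coincide under the two models, so the corresponding factor in \eqref{eq:LR2} equals $1$. Only the $T_N(s,a)$ indices $k$ with $(s_k,a_k)=(s,a)$ survive; reindexing these occurrences chronologically by $j=1,\dots,T_N(s,a)$, the next state after the $j$th such visit is $Z_j(s,a)$ and the holding time is $t_j(s,a)$. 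Using that the joint density of (holding time, next state) given $(s,a)$ is $\lambda(s,a)e^{-\lambda(s,a)t}\,p(\cdot|s,a)$ under $\mathcal{M}$ (and the same expression with barred parameters under $\mathcal{\overline M}$), the surviving factor for the $j$th occurrence is exactly $\frac{\lambda(s,a)e^{-\lambda(s,a)t_j(s,a)}}{\bar\lambda(s,a)e^{-\bar\lambda(s,a)t_j(s,a)}}\cdot\frac{p(Z_j(s,a)|s,a)}{\bar p(Z_j(s,a)|s,a)}$; multiplying over $j$ and comparing with the definition \eqref{eq:Lambda} gives \eqref{eq:LR3}.

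\textbf{Part (ii).} The key observation is that under $\mathbf{P}:=\mathbf{P}_{(p(\cdot|s,a),\lambda(s,a))}$ the pairs $(Z_j(s,a),t_j(s,a))_{j\ge1}$ are i.i.d., with $Z_j(s,a)\sim p(\cdot|s,a)$ and $t_j(s,a)$ exponential with rate $\lambda(s,a)$. Abbreviating $q=p(\cdot|s,a)$, $\bar q=\bar p(\cdot|s,a)$, $\lambda=\lambda(s,a)$, $\bar\lambda=\bar\lambda(s,a)$ and $I=I([p(\cdot|s,a),\lambda(s,a)],[\bar p(\cdot|s,a),\bar\lambda(s,a)])$, one has $\log\Lambda_k=\sum_{j=1}^{k}\xi_j$ with i.i.d.\ increments
\[
\xi_j=\log\frac{q_{Z_j(s,a)}}{\bar q_{Z_j(s,a)}}+\log\frac{\lambda}{\bar\lambda}+(\bar\lambda-\lambda)\,t_j(s,a).
\]
Since $q$ and $\bar q$ both lie in $\Theta(s,a)$ they share the support $\mathcal{S}^+(s,a)$ and the first term is bounded; the last term is integrable because $t_j(s,a)$ is, so $\mathbb{E}|\xi_j|<\infty$, and taking expectations term by term (with $\mathbb{E}[t_j(s,a)]=1/\lambda$) gives $\mathbb{E}[\xi_j]=I$. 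If $I=0$ then necessarily $q=\bar q$ and $\lambda=\bar\lambda$, so $\Lambda_k\equiv1$ and the event in question is empty; hence assume $I>0$. By the strong law of large numbers $\log\Lambda_k/k\to I$ almost surely, so $\log\Lambda_k-(1+c/2)kI\to-\infty$ almost surely, and therefore $W:=\sup_{k\ge1}\big(\log\Lambda_k-(1+c/2)kI\big)$ is almost surely finite. For every $m$, $\max_{k\le m}\log\Lambda_k\le(1+c/2)mI+W$, so with $m=\floor{b_n}$,
\[
\mathbf{P}\Big(\max_{k\le\floor{b_n}}\log\Lambda_k/\floor{b_n}>(1+c)I\Big)\le\mathbf{P}\big(W>(c/2)\floor{b_n}I\big)\longrightarrow 0\quad\text{as }n\to\infty,
\]
since $W<\infty$ a.s.\ and $\floor{b_n}\to\infty$.

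\textbf{Main obstacle.} Part (i) is a routine collapse of a telescoping product and carries no real difficulty. The only genuine obstacle is in part (ii): the increments $\xi_j$ are unbounded and, because of the exponential holding times, have heavier-than-Gaussian tails, so a direct concentration or Doob-martingale estimate on $\Lambda_k$ under $\mathbf{P}$ is awkward (indeed $\mathbb{E}_{\mathbf{P}}[\Lambda_k]$ may be infinite). The way around this is to exploit the slack built into the statement — division by the full $\floor{b_n}$ and only an excess over $(1+c)I$ is required — so that the strong law of large numbers, combined with the elementary fact that a random walk with strictly positive drift stays below a line of slope $(1+c/2)I$ except finitely often, is enough; no sharp tail control of $\xi_j$ is needed.
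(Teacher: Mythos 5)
Your proposal is correct and follows essentially the same route as the paper: part (i) by cancelling every factor of the likelihood ratio with $(s_k,a_k)\neq(s,a)$, and part (ii) by applying the strong law of large numbers to the i.i.d.\ increments of $\log \Lambda_k$, whose mean is exactly $I$. Your additional care (integrability of the increments, the degenerate case $I=0$, and the positive-drift bound via $W$ instead of directly asserting a.s.\ convergence of the running maximum $\max_{k\le\floor{b_n}}\log\Lambda_k/\floor{b_n}$) merely fills in details the paper leaves implicit.
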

\begin{proof}
(i) We have  $ \mathbb{P}^{\mathcal{M}}_{s_0} ( s_{k+1} |s_k, a_k) =   \mathbb{P}^{ \mathcal{\overline M}}_{s_0} ( s_{k+1} |s_k, a_k)  $ unless $(s_k, a_k) = (s, a)$, by the construction of $\mathcal{\overline M}$. 
Similarly, $ \mathbb{P}^{ \mathcal{ M}}_{s_0}( \tau_k |s_k, a_k) =  \mathbb{P}^{ \mathcal{\overline M}}_{s_0} ( \tau_k |s_k, a_k)$ unless $(s_k, a_k) = (s, a)$. Hence, all the terms except those corresponding to $(s_k, a_k) = (s, a)$ are cancelled in \eqref{eq:LR2}, leading to the desired result \eqref{eq:LR3}.

(ii) Since $Z_j(s, a), j =1 ,2 ,\ldots$ are i.i.d.,  and so are $t_j(s, a), j =1 ,2 ,\ldots$, the strong law of large numbers yields that as $k \rightarrow \infty,$
\begin{align*}
&\log \Lambda_k([p(\cdot|s,a), \lambda(s, a)], [ \bar p(\cdot|s,a), \bar \lambda(s, a)]) /k  \\
& \quad  \rightarrow I( [p(\cdot|s,a), \lambda(s, a)], [ \bar p(\cdot|s,a), \bar \lambda(s, a)]),\quad \mbox{a.s.} \ \mathbf{P}_{(p(\cdot|s,a), \lambda(s, a))}.
\end{align*}
  Consequently,  $\max_{k \le \floor{b_n}} \log \Lambda_k / \floor{b_n} \rightarrow I( [p(\cdot|s,a), \lambda(s, a)], [ \bar p(\cdot|s,a), \bar \lambda(s, a)])$ a.s. for any increasing sequence of $(b_n)$ with $b_n \rightarrow \infty.$ The result then follows.
\end{proof}


Given an algorithm $\mathcal{G},$
denote by $T_N(s)$ the total number of visits of the embedded jump chain $(s_n)$ of the CTMDP to the state $s$ up to the $N-$th decision epoch, i.e., $T_N(s)= \sum_{n=0}^{N-1} 1_{s_n=s}$. Under Assumption~\ref{assume-transition}, we have the following result.

\begin{lemma}[Proposition~2 of \cite{burnetas1997optimal}]\label{lem:irred}
There exists some $\beta>0$ such that for any algorithm $\mathcal{G}$ and $s \in \mathcal{S},$
\begin{align*}
\mathbb{P}^{\mathcal{M}}_{s_0}(T_N(s) < \beta N) = o(1) , \quad \text{as $N \rightarrow \infty$.}
\end{align*}
\end{lemma}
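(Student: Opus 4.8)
The plan is to reduce the statement to a uniform reachability estimate for the embedded jump chain and then close with a block-and-concentration argument. Note first that $T_N(s)$ is a functional of $(s_0,\dots,s_{N-1})$ alone, and under a learning algorithm $\mathcal{G}$ the jump chain evolves as $s_{n+1}\sim p(\cdot\,|\,s_n,a_n)$ with $a_n$ chosen from the history; the holding times enter only as extra randomness observed by $\mathcal{G}$. Thus the assertion is exactly Proposition~2 of \cite{burnetas1997optimal} for discrete-time MDPs under their Assumption~(A), which we have inherited as Assumption~\ref{assume-transition}, and its proof applies without essential change; I include the argument for completeness.

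\emph{Step 1 (uniform reachability).} For $x\in\mathcal{S}$ and $m\ge0$ let $g_m(x)=\sup\mathbb{P}_x(s_1\ne s,\dots,s_m\ne s)$, the supremum over all history-dependent action rules, with $g_0\equiv1$. A short induction on $m$, in which the holding times play no role, shows that under \emph{any} learning algorithm and any history $\omega_n$,
\[
\mathbb{P}_{s_0}(s_k\ne s \text{ for all } k\in\{n+1,\dots,n+m\}\mid\omega_n)\ \le\ g_m(s_n),
\]
while the one-step recursion $g_m(x)=\max_{a\in\mathcal{A}}\sum_{y\ne s}p(y\,|\,x,a)g_{m-1}(y)$ shows $g_m\downarrow g_\infty$ with $g_\infty(x)=\max_a\sum_{y\ne s}p(y\,|\,x,a)g_\infty(y)$. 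I claim $g_\infty\equiv0$ (the case $|\mathcal{S}|=1$ being trivial). Otherwise $M:=\max_x g_\infty(x)>0$; inspecting a maximizing state and action shows the set $D:=\{x:g_\infty(x)=M\}$ is nonempty and that each $x\in D$ admits an action $a(x)$ with $p(s\,|\,x,a(x))=0$ and $\mathcal{S}^+(x,a(x))\subseteq D$. Extending $a(\cdot)$ arbitrarily to a stationary deterministic policy then renders $D$ a closed set for the induced chain; if $s\notin D$ this is a proper nonempty closed set, contradicting irreducibility, while if $s\in D$ then irreducibility forces $D=\mathcal{S}$, so every state admits an action sending no mass to $s$, and then $\mathcal{S}\setminus\{s\}$ is a proper nonempty closed set — again a contradiction. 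Hence $g_\infty\equiv0$; since $\mathcal{S}$ is finite and $g_m\downarrow0$, $\max_x g_m(x)\to0$, and we fix $m$ with $\max_x g_m(x)\le\tfrac12$. This makes $s$ visited within any window of $m$ decision epochs with conditional probability at least $\tfrac12$, uniformly over states, algorithms, and — there being finitely many — target states $s$.

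\emph{Step 2 (concentration).} Partition the epochs $\{0,\dots,N-1\}$ into $K\asymp N/m$ consecutive blocks of length $m$ (discarding a bounded remainder), and let $B_i$ be the indicator that $s$ is visited during block $i$. By Step~1, $\mathbb{P}(B_i=1\mid\mathcal{F}_i)\ge\tfrac12$ with $\mathcal{F}_i$ the history up to the start of block $i$, so $\sum_{i=1}^K B_i$ stochastically dominates a $\mathrm{Binomial}(K,\tfrac12)$ variable; since $T_N(s)\ge\sum_{i=1}^K B_i$, a Chernoff bound gives $\mathbb{P}_{s_0}(T_N(s)<K/4)\le\mathbb{P}(\mathrm{Binomial}(K,\tfrac12)<K/4)\le e^{-cN}$ for some $c>0$ and all large $N$. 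Taking $\beta=1/(5m)$, so that $\beta N\le K/4$ eventually, yields $\mathbb{P}_{s_0}(T_N(s)<\beta N)\le e^{-cN}=o(1)$, uniformly in $s_0$, $s$, and $\mathcal{G}$.

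The main obstacle is Step~1: proving that no learning algorithm — however adaptive, and even if the holding times are revealed — can steer the jump chain so as to avoid a prescribed state over an arbitrarily long window with probability bounded away from $0$. This is precisely where Assumption~\ref{assume-transition} is used, via the reduction of worst-case avoidance to stationary deterministic policies and the irreducibility of the latter; given that uniform reachability estimate, the block decomposition and Chernoff bound are routine.
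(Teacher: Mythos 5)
Your proposal is correct, but it is worth noting how it relates to the paper: the paper gives no proof of this lemma at all, simply invoking Proposition~2 of \cite{burnetas1997optimal} for discrete-time MDPs under their Assumption~(A) (inherited here as Assumption~\ref{assume-transition}). What you supply is essentially the classical argument behind that cited result, written out in full: a dynamic-programming bound $g_m$ on the worst-case probability of avoiding $s$ over an $m$-step window, the fixed-point argument showing $g_\infty\equiv 0$ (where the equality analysis produces, for each maximizing state, an action putting no mass on $s$ and supported in the maximizing set $D$, and irreducibility of the induced stationary deterministic policy rules out the resulting closed set), followed by a block decomposition and a Chernoff/stochastic-domination bound — which in fact yields the stronger exponential rate $e^{-cN}$ rather than just $o(1)$. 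The one substantive point your write-up adds over the bare citation is the reduction itself: here the algorithm also observes the holding times $\tau_0,\dots,\tau_{n-1}$, so one must check that this extra information cannot improve the avoidance probability; your Step~1 handles this correctly because, given the current state--action pair, the next state is conditionally independent of the holding times, so the holding times act only as an independent randomization device and the discrete-time DP bound $g_m(s_n)$ still dominates the conditional avoidance probability under any learning algorithm. The only blemish is cosmetic: the window controlled by Step~1 is $\{n+1,\dots,n+m\}$ while your blocks are $\{im,\dots,(i+1)m-1\}$, so the block indices should be shifted by one (e.g.\ take block $i$ to be $\{im+1,\dots,(i+1)m\}$ and condition on the history up to epoch $im$); this does not affect the argument.
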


\begin{lemma}\label{lem: UF-N}
If $\mathcal{\overline M} = \mathcal{\overline M}(s, a; \mathcal{ M} , \bar p(\cdot|s,a), \bar \lambda(s, a)) $, i.e., action $a$ is uniquely optimal at state $s$ under the model $\mathcal{\overline M}$,  then
for any UF algorithm $\mathcal{G}$,
\begin{align} \label{eq:UF_Q}
\mathbb{E}_{s_0}^{ \mathcal{ \overline M} }[T_N(s) - T_N(s, a)] = \mathbb{E}_{s_0}^{ \mathcal{ \overline M} }[\sum_{a' \ne a }T_N(s, a')] = o(N^{\alpha}), \quad \text{for any $\alpha>0$,}
\end{align}
\rev{where $\mathbb{E}_{s_0}^{ \mathcal{ \overline M} }$ is the expectation under the model $\mathcal{\overline M}$, algorithm $\mathcal{G}$ and  starting state $s_0$.}
\end{lemma}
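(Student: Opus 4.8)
The plan is to reduce the statement to an upper bound on the decision-epoch regret $\mathcal{R}_N^{\mathcal{G}}(s_0,\mathcal{\overline M})$ of $\mathcal{G}$ under the modified model, and then to obtain that bound by transferring the uniform-fast-convergence property from continuous time to decision epochs via a stochastic comparison of the epoch-counting process with Poisson processes. First, I would apply Proposition~\ref{lem:decomp} to the CTMDP $\mathcal{\overline M}\in\mathcal{H}$ and the algorithm $\mathcal{G}$, giving
\[
\mathcal{R}_N^{\mathcal{G}}(s_0,\mathcal{\overline M}) = \sum_{s'\in\mathcal{S}}\ \sum_{a'\notin O(s';\mathcal{\overline M})}\mathbb{E}_{s_0}^{\mathcal{\overline M}}[T_N(s',a')]\,\phi^*(s',a';\mathcal{\overline M}) + O(1),
\]
where every summand is nonnegative. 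Since by hypothesis $a$ is \emph{uniquely} optimal at $s$ under $\mathcal{\overline M}$, we have $O(s;\mathcal{\overline M})=\{a\}$ and $\delta := \min_{a'\neq a}\phi^*(s,a';\mathcal{\overline M})>0$; retaining only the terms with $s'=s$ and using $T_N(s)-T_N(s,a)=\sum_{a'\neq a}T_N(s,a')$ yields $\delta\,\mathbb{E}_{s_0}^{\mathcal{\overline M}}[T_N(s)-T_N(s,a)]\le \mathcal{R}_N^{\mathcal{G}}(s_0,\mathcal{\overline M})+O(1)$. Thus it suffices to prove $\mathcal{R}_N^{\mathcal{G}}(s_0,\mathcal{\overline M})=o(N^{\alpha})$ for every $\alpha>0$.

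Next, I would establish the continuous-time analogue of the regret decomposition: for every deterministic $T>0$,
\[
\mathbb{E}_{s_0}^{\mathcal{\overline M}}[R_T^{\mathcal{G}}(s_0,\mathcal{\overline M})] = \mathbb{E}_{s_0}^{\mathcal{\overline M}}\Big[\textstyle\sum_{n=0}^{N(T)-1}\phi^*(s_n,a_n;\mathcal{\overline M})\Big] + O(1).
\]
This follows by rewriting the definition~\eqref{eq:sub-gap} (for the parameters of $\mathcal{\overline M}$) as $\rho^*(\mathcal{\overline M})/\lambda(s_n,a_n)=r(s_n,a_n)+\sum_j p(j|s_n,a_n)h^*(j;\mathcal{\overline M})-h^*(s_n;\mathcal{\overline M})+\phi^*(s_n,a_n;\mathcal{\overline M})$, summing over $n<N(T)$ and taking expectation under $\mathcal{\overline M}$: the sum of $\sum_j p(j|s_n,a_n)h^*(j;\mathcal{\overline M})-h^*(s_n;\mathcal{\overline M})$ telescopes up to a martingale increment and contributes $O(1)$, using that $h^*(\cdot;\mathcal{\overline M})$ is bounded, $\mathbb{E}_{s_0}^{\mathcal{\overline M}}[N(T)]<\infty$ (from a stochastic comparison with a rate-$\lambda_{\max}$ Poisson process), and optional stopping at the stopping time $N(T)$; moreover $\mathbb{E}_{s_0}^{\mathcal{\overline M}}[\sum_{n<N(T)}\tfrac{\rho^*(\mathcal{\overline M})}{\lambda(s_n,a_n)}]=\rho^*(\mathcal{\overline M})\,\mathbb{E}_{s_0}^{\mathcal{\overline M}}[\sum_{n<N(T)}\tau_n]+O(1)$ by optional stopping for the martingale $\sum_{n<k}(\tau_n-1/\lambda(s_n,a_n))$ (whose conditional second moments are uniformly bounded by Assumption~\ref{assume-rate}), while $\mathbb{E}_{s_0}^{\mathcal{\overline M}}[\sum_{n<N(T)}\tau_n]=T+\mathbb{E}_{s_0}^{\mathcal{\overline M}}[\text{residual holding time at }T]=T+O(1)$, that residual being exponential with rate at least $\lambda_{\min}$ by memorylessness. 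This last point is precisely the ``adjusted regret'' step, replacing the holding time straddling $T$ by its mean. Combining with the UF property $\mathbb{E}_{s_0}^{\mathcal{\overline M}}[R_T^{\mathcal{G}}(s_0,\mathcal{\overline M})]=o(T^{\alpha})$ from \eqref{eq:UF} then gives $\mathbb{E}_{s_0}^{\mathcal{\overline M}}[\sum_{n=0}^{N(T)-1}\phi^*(s_n,a_n;\mathcal{\overline M})]=o(T^{\alpha})+O(1)$ for every $\alpha>0$.

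Finally, I would take $T=T_N:=2N/\lambda_{\min}$. By a stochastic comparison of point processes \citep{whitt1981comparing} --- concretely, the coupling $\tau_i=-\log(U_i)/\lambda(s_i,a_i)\le -\log(U_i)/\lambda_{\min}$ with $U_i$ i.i.d.\ uniform on $(0,1)$ --- the epoch time $S_N$ is stochastically dominated by a sum of $N-1$ i.i.d.\ $\mathrm{Exp}(\lambda_{\min})$ random variables, so a Chernoff bound gives $\mathbb{P}_{s_0}^{\mathcal{\overline M}}(S_N>T_N)\le e^{-cN}$ for some $c>0$. On $\{S_N\le T_N\}$ we have $N(T_N)\ge N$, and since $\phi^*(\cdot,\cdot;\mathcal{\overline M})\ge 0$ this forces $\sum_{n=0}^{N-1}\phi^*(s_n,a_n;\mathcal{\overline M})\le \sum_{n=0}^{N(T_N)-1}\phi^*(s_n,a_n;\mathcal{\overline M})$; on the complement I would use the crude bound $\sum_{n=0}^{N-1}\phi^*(s_n,a_n;\mathcal{\overline M})\le N\,\phi^*_{\max}$ with $\phi^*_{\max}:=\max_{s',a'}\phi^*(s',a';\mathcal{\overline M})<\infty$ (finiteness of $\mathcal{S},\mathcal{A}$, bounded rewards, rates and bias function). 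These combine to
\[
\mathbb{E}_{s_0}^{\mathcal{\overline M}}\Big[\textstyle\sum_{n=0}^{N-1}\phi^*(s_n,a_n;\mathcal{\overline M})\Big] \le \mathbb{E}_{s_0}^{\mathcal{\overline M}}\Big[\textstyle\sum_{n=0}^{N(T_N)-1}\phi^*(s_n,a_n;\mathcal{\overline M})\Big] + N\,\phi^*_{\max}\,e^{-cN} = o(T_N^{\alpha})+O(1) = o(N^{\alpha}),
\]
and by Proposition~\ref{lem:decomp} the left-hand side equals $\mathcal{R}_N^{\mathcal{G}}(s_0,\mathcal{\overline M})+O(1)$; hence $\mathcal{R}_N^{\mathcal{G}}(s_0,\mathcal{\overline M})=o(N^{\alpha})$ for every $\alpha>0$, which by the first paragraph proves the lemma. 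I expect the main obstacle to be the continuous-time decomposition in the second step: it must be performed at the \emph{random} epoch count $N(T)$, which forces one to control the holding time straddling $T$ (the source of the adjusted-regret correction) and to justify the optional-stopping / Wald-type identities for martingale increments that are unbounded --- exponential holding times --- though of uniformly bounded conditional variance. The structural reason a cruder comparison between $R_{S_N}^{\mathcal{G}}$ and $R_T^{\mathcal{G}}$ fails is that the decision epochs accumulated between epoch $N$ and time $T_N$ contribute a \emph{nonnegative} amount $\sum_{n=N}^{N(T_N)-1}\phi^*\ge 0$ to the regret, so keeping them can only help the upper bound; a bound like ``extra reward collected $\le N(T_N)-N$'' would be of order $N$ and useless, because $T_N\gtrsim N/\lambda_{\min}$ must be large enough that $N(T_N)$ can be a constant multiple of $N$ when $\lambda_{\min}<\lambda_{\max}$.
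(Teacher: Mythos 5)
Your argument is correct, and it reaches the same two endpoints as the paper (the UF property of $\mathcal{G}$ applied under $\mathcal{\overline M}$, and Proposition~\ref{lem:decomp} to convert $\mathcal{R}_N^{\mathcal{G}}(s_0,\mathcal{\overline M})=o(N^{\alpha})$ into \eqref{eq:UF_Q} via the strictly positive gaps at state $s$), but the middle step is genuinely different. The paper goes directly through the random epoch time: it writes $\mathcal{R}_N^{\mathcal{G}}(s_0,\mathcal{\overline M})=\mathbb{E}_{s_0}[R_{S_N}^{\mathcal{G}}]$, conditions on $S_N$ and applies the deterministic-horizon bound $\mathbb{E}_{s_0}[R_T^{\mathcal{G}}]\le \epsilon T^{\alpha}+T_0\rho^*$ inside that conditional expectation, and then controls $\mathbb{E}_{s_0}[S_N^{\alpha}]$ by coupling the holding times with i.i.d.\ $\mathrm{Exp}(\lambda_{\min})$ variables and using the strong law plus dominated convergence to get $\limsup_N \mathbb{E}_{s_0}[S_N^{\alpha}]/N^{\alpha}\le (1/\lambda_{\min})^{\alpha}$. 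You instead never evaluate \eqref{eq:UF} at a random time: you first prove a continuous-time analogue of Proposition~\ref{lem:decomp} at a \emph{deterministic} horizon $T$ (optional stopping for the bounded-increment $h^*$-martingale and the Wald-type martingale for the holding times, plus the memoryless residual-time correction), and then pass from $N$ epochs to the deterministic horizon $T_N=2N/\lambda_{\min}$ using nonnegativity of $\phi^*$ together with a Chernoff bound on $S_N$. The paper's route is shorter --- no optional stopping, no concentration estimate, no continuous-time decomposition --- whereas your route buys a cleaner use of the UF hypothesis (only at fixed $T$, sidestepping the conditioning-on-$S_N$ step that the paper glosses over) at the cost of extra bookkeeping ($\mathbb{E}[N(T)]<\infty$ via comparison with a rate-$\lambda_{\max}$ Poisson process, and optional stopping with unbounded but conditionally integrable increments); your intermediate identity also essentially anticipates, under $\mathcal{\overline M}$, the Poisson-comparison and ``adjusted regret'' devices the paper deploys only later, in the proofs of \eqref{eq:coro-LB} and Theorem~\ref{thm:UCRL}.
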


\begin{proof}
Since $\mathcal{G}$ is UF and $\mathcal{ \overline M} \in \mathcal{H}$, by definition we have $\mathbb{E}_{s_0}[ R_T^{\mathcal{G}} (s_0, \mathcal{ \overline M})] = o(T^{\alpha})$ as $T \rightarrow \infty$ for any $\alpha>0$. Then, for any $\epsilon>0$, there exists some $T_0$ such that $\mathbb{E}_{s_0}[ R_T^{\mathcal{G}} (s_0, \mathcal{ \overline M})]\le \epsilon T^{\alpha}$ for $T \ge T_0$. It follows that $\mathbb{E}_{s_0}[ R_T^{\mathcal{G}} (s_0, \mathcal{ \overline M})]\le \epsilon T^{\alpha} + T_0 \rho^*$ for any $T \ge 0.$
By virtue of  \eqref{eq:regret_SN}, we get
\begin{align} \label{eq:RN-bound}
\mathcal{R}_N^{\mathcal{G}}(s_0, \mathcal{M}) :=  \mathbb{E}_{s_0} [R_{S_N}^{\mathcal{G}} (s_0, \mathcal{M}) ] =  \mathbb{E}_{s_0} \left( \mathbb{E}_{s_0} [R_{S_N}^{\mathcal{G}} (s_0, \mathcal{M}) | S_N ]  \right)
\le \epsilon  \cdot \mathbb{E}_{s_0}[S_N^{\alpha}] + T_0 \rho^*.
\end{align}

Denote by $(\tau_i)$ the sequence of holding times under any algorithm $\mathcal{G}$ applied to the CTMDP $\mathcal{ \overline M} \in \mathcal{H}$. Since $S_N = \sum_{i=0}^{N-2} \tau_i$, by Assumption~\ref{assume-rate}, it is possible to construct a probability space
and a sequence of i.i.d. exponential random variables $(\hat \tau_i)$ with rate $\lambda_{\min}$ such that the law of $(S_N: n \ge 1)$ is the same as the law of $(\tilde S_N: N \ge1)$ and $\tilde S_N \le \hat S_N = \sum_{i=0}^{N-2} \hat \tau_i$ for all $N$ and all sample paths; see, e.g., Section 2 of \cite{whitt1981comparing}.
 It follows that $\mathbb{E}_{s_0}[S_N^{\alpha}] = \mathbb{E}_{s_0}[\tilde S_N^{\alpha}] \le \mathbb{E}[ \hat S_N^{\alpha}] $. Without loss of generality we consider $\alpha \in (0,1)$. From the strong law of large numbers we have $\hat S_N^{\alpha}/N^{\alpha} \rightarrow (1/\lambda_{\min})^{\alpha}$ a.s. when $N \rightarrow \infty.$ Moreover, $\mathbb{E}[ \hat S_N^{\alpha}] / N^{\alpha} \le  \mathbb{E}[ \hat S_N] / N =  1/\lambda_{\min} \cdot (N-1)/N \le 1/\lambda_{\min} < \infty$ for each $N$. By the dominated convergence theorem we then obtain
$\lim_{ N \rightarrow \infty } \frac{ \mathbb{E}[ \hat S_N^{\alpha}]}{ N^{\alpha} } =  (1/\lambda_{\min})^{\alpha}.$
As a consequence,
$\limsup_{ N \rightarrow \infty } \frac{ \mathbb{E}_{s_0}[S_N^{\alpha}] }{N^{\alpha} } \le  (1/\lambda_{\min})^{\alpha}.$
It then follows from \eqref{eq:RN-bound} that $
 \limsup_{ N \rightarrow \infty } \frac{\mathcal{R}_{N}^{\mathcal{G}}(s_0, \mathcal{ \overline M}) } { N^{\alpha} }\le \epsilon  \cdot (1/\lambda_{\min})^{\alpha}.
$
Since $\epsilon>0$ is arbitrary, we obtain $\mathcal{R}_{N}^{\mathcal{G}}(s_0,  \mathcal{ \overline M} )= o(N^{\alpha})$.

From Proposition~\ref{lem:decomp}, we know that
for any learning algorithm $\mathcal{G}$, the expected regret up to $N-$th decision epoch satisfies
\begin{align*}
\mathcal{R}_N^{\mathcal{G}}(s_0, \mathcal{ \overline M}) = \sum_{x \in \mathcal{S}}  \sum_{a' \ne a } \mathbb{E}_{s_0}^{ \mathcal{ \overline M} }[T_N(x, a')]  \phi^*(x, a'; \mathcal{ \overline M}) + O(1),
\end{align*}
where $\{a\} = O(x; \mathcal{ \overline M})$.
Since $\mathcal{R}_{N}^{\mathcal{G}}(s_0, \mathcal{ \overline M})= o(N^{\alpha})$, and $\phi^*(x, a'; \mathcal{ \overline M})$ are all positive and uniformly bounded away from zero for $a' \ne a$, we arrive at \eqref{eq:UF_Q}.
\end{proof}

Now we are ready to prove Proposition~\ref{lem: TN-LB}.

\begin{proof}[Proof of Proposition~\ref{lem: TN-LB}]
Given all the previous lemmas, the proof is similar to that of Theorem 2 in \cite{burnetas1997optimal}; so we only outline the key steps while highlighting the differences. It suffices to show for any $\epsilon>0,$
\begin{align*}
\lim_{N \rightarrow \infty} \mathbb{P}_{s_0} \left( T_N(s, a) < \frac{(1-\epsilon) \log N}{ K(s, a ; \mathcal{M}) }\right)=0 .
\end{align*}
By Lemma~\ref{lem:irred}, it further remains to show
\begin{align*}
 \mathbb{P}_{s_0} \left( T_N(s, a) < \frac{(1-\epsilon) \log N}{ K(s, a ; \mathcal{M}) }, \quad T_N(s) \ge \beta N \right)=o(1), \quad \text{as $N \rightarrow \infty.$}
\end{align*}

Let $\mathcal{G}$ be a UF algorithm, $(s, a) \in B(\mathcal{M})$ and $\delta = \epsilon/(2- \epsilon)>0.$ By the definition of
$K(s, a ; \mathcal{M})$ in \eqref{eq:K}, one can readily verify that
\[ (1-\epsilon)/K(s, a; \mathcal{M}) \le (1-\delta)/I( [p(\cdot|s,a), \lambda(s, a)], [ \bar p(\cdot|s,a), \bar \lambda(s, a)]). \]
Write $b_N = (1-\delta) \log N / I( [p(\cdot|s,a), \lambda(s, a)], [ \bar p(\cdot|s,a), \bar \lambda(s, a)]) $, $\omega_N$ the history up to decision epoch $N$, and $E_N^{\delta} = \{\omega_N: T_N(s) \ge \beta N,  \quad T_N(s, a) < b_N \}$. Then it suffices to prove
\begin{align} \label{eq:PAN}
\mathbb{P}_{s_0}(E_N^{\delta})= o(1), \quad \text{as $N \rightarrow \infty.$}
\end{align}

Since $\mathcal{G}$ is UF, and action $a$ is now uniquely optimal under $\mathcal{ \overline M} = \mathcal{\overline M}(s, a; \mathcal{ M} , \bar p(\cdot|s,a), \bar \lambda(s, a))$, it follows from Lemma~\ref{lem: UF-N} that
$
\mathbb{E}_{s_0}^{ \mathcal{ \overline M} }[T_N(s) - T_N(s, a)] = \mathbb{E}_{s_0}^{ \mathcal{ \overline M} }[\sum_{a' \ne a }T_N(s, a')] = o(N^{\alpha}).
$
Thus the Markov inequality implies
\begin{align}\label{eq:PM}
 \mathbb{P}_{s_0}^{\mathcal{G}, \mathcal{ \overline M} }( E_N^{\delta}  )  \le  \mathbb{P}_{s_0}^{\mathcal{G}, \mathcal{ \overline M} } ( T_N(s) - T_N(s, a)  \ge \beta N -b_N)  = \frac{o(N^{\delta/2})}{\beta N - b_N} = o(N^{\delta/2 -1}).
\end{align}
Let $ B_N^{\delta} = \{\omega_N: \log \Lambda_{T_N(s, a)}([p(\cdot|s,a), \lambda(s, a)], [ \bar p(\cdot|s,a), \bar \lambda(s, a)] ) \le (1-\delta/2) \log N\} $. 
To show \eqref{eq:PAN}, we first show $ \mathbb{P}_{s_0}( E_N^{\delta}   \cap B_N^{\delta}) = o(1).$ Using the likelihood ratio \eqref{eq:LR1}, equation~\eqref{eq:LR3} and the definition of $B_N^{\delta}$, we have
\begin{align*}
 \mathbb{P}_{s_0}( E_N^{\delta}   \cap B_N^{\delta})  \le  e^{(1- \delta/2)  \log N} \cdot  \mathbb{P}_{s_0}^{\mathcal{G}, \mathcal{ \overline M} } (E_N^{\delta} \cap   B_N^{\delta}) \le  N^{1- \delta/2} \cdot \mathbb{P}_{s_0}^{\mathcal{G}, \mathcal{ \overline M} }( E_N^{\delta}  ) =  o(1),
\end{align*}
where the last equality is due to \eqref{eq:PM}.

We next show $ \mathbb{P}_{s_0}( E_N^{\delta} \cap \bar B_N^{\delta}) = o(1).$ Note that
\begin{align*}
&\mathbb{P}_{s_0}( E_N^{\delta} \cap  \bar B_N^{\delta}) \\
& \le  \mathbf{P}_{(p(\cdot|s,a), \lambda(s, a))} \left( \max_{k \le \floor{b_N}} {\log \Lambda_k} /\floor{b_N} >\frac{1- \delta/2}{1- \delta}  \cdot I( [p(\cdot|s,a), \lambda(s, a)], [ \bar p(\cdot|s,a), \bar \lambda(s, a)]) \right) .
\end{align*}
 From part (ii) of Lemma~\ref{lem:LR-SLLN}
we obtain that $ \mathbb{P}_{s_0}( E_N^{\delta}  \cap \bar B_N^{\delta}) = o(1).$ Hence we have proved \eqref{eq:PAN}. The proof is therefore complete.
\end{proof}

\begin{proof}[Proof of Theorem~\ref{thm:main}]
Equation \eqref{eq:lower-bound-N} directly follows from Propositions~\ref{lem:decomp} and~\ref{lem: TN-LB} along with the definition of $C(\mathcal{M})$ in \eqref{eq:CM}.

We next prove \eqref{eq:coro-LB}.
Recall that
$
R_T^{\mathcal{G}}(s_0, \mathcal{ M})  = \rho^* \cdot T -  \sum_{n=0}^{N(T)-1} r(s_n, a_n),
$
where $N(T)$ is the number of decision epochs by time $T$ under the learning algorithm $\mathcal{G}$. Since the reward is bounded by one, we have
\begin{align*}
| R_T^{\mathcal{G}}(s_0, \mathcal{ M})  -  R_{S_{N(T) +1}}^{\mathcal{G}}(s_0, \mathcal{  M}) | \le \rho^* \cdot [S_{N(T) +1} - S_{N(T)}] + 1,
\end{align*}
where we recall $S_n$ denotes the $n-$th decision epoch under a given algorithm $\mathcal{G}$.
It then follows from Assumption~\ref{assume-rate} that
\begin{align} \label{eq:TNT}
 \mathbb{E}_{s_0} | R_T^{\mathcal{G}}(s_0, \mathcal{ M})  - R_{S_{N(T)+1}}^{\mathcal{G}}(s_0, \mathcal{ M})   | \le  \frac{\rho^*}{\lambda_{\min}} + 1.
\end{align}
Dividing both sides by $\log T$ and sending $T$ to infinity, we have
$
\liminf_{T \rightarrow \infty}  \frac{\mathbb{E}_{s_0}[R_T^{\mathcal{G}} (s_0, \mathcal{M})] } { \log T}  \ge \liminf_{T \rightarrow \infty}  \frac{\mathbb{E}_{s_0}[ R_{S_{N(T) +1}}^{\mathcal{G}}(s_0, \mathcal{ M}) ]  }{ \log T}.
$
Hence, to prove \eqref{eq:coro-LB}  it suffices to show
\begin{align*}
\liminf_{T \rightarrow \infty} \mathbb{E}_{s_0}[ R_{S_{N(T) +1}}^{\mathcal{G}}(s_0, \mathcal{ M}) ]   / \log T  \ge C(\mathcal{M}).
\end{align*}
From \eqref{eq:lower-bound-N}, we know that for any $\epsilon>0$, there exists $N_0>0$ such that $\mathcal{R}_N^{\mathcal{G}}(s_0, \mathcal{  M})  / \log N  \ge ( C(\mathcal{M}) - \epsilon)$ when $N \ge N_0.$ Hence for all $N \ge 1,$
\begin{align*}
 \mathcal{R}_N^{\mathcal{G}}(s_0, \mathcal{ M}) = \mathbb{E}_{s_0}[ R_{S_{N}}^{\mathcal{G}}(s_0, \mathcal{ M}) ] \ge (C(\mathcal{M})- \epsilon) \log N  - C_0,
\end{align*}
where $C_0:=\frac{\rho^* N_0}{\lambda_{\min}}$ is a constant that bounds the expected regret in the first $N_0$ decision steps under any algorithm.
It follows that 
\begin{align} \label{eq:LB-intermediate}
\mathbb{E}_{s_0}[ R_{S_{N(T) +1}}^{\mathcal{G}}(s_0, \mathcal{ M}) ] & = \mathbb{E}_{s_0} \left( \mathbb{E}_{s_0}[ R_{S_{N(T) +1}}^{\mathcal{G}}(s_0, \mathcal{ M}) | N(T)+1] \right) \nonumber \\
& \ge (C(\mathcal{M})- \epsilon) \mathbb{E}_{s_0}[ \log  (N(T) +1)] - C_0.
\end{align}
Note that due to Assumption~\ref{assume-rate},  we have under any algorithm $\mathcal{G},$ the counting process $\{N(t) -1: t \ge 0\}$ is sandwiched pathwise between two Poisson processes $A_1(\cdot)$ and $A_2(\cdot)$ with rates $\lambda_{\min}$ and $\lambda_{\max}$ respectively, i.e. $A_1(t) \le N(t) -1 \le A_2(t)$ for all $t \ge 0$ and all sample paths, see, e.g., Section 2 of \cite{whitt1981comparing}. Note that we consider $N(t) -1$ because there is a decision made at time 0 so that $N(0)-1=0$.
For a Poisson process $A(\cdot)$ with any rate $\mu>0$, we can infer from the strong law of large numbers that
\begin{align*}
 \lim_{T \rightarrow \infty}\mathbb{E} \left[ \log (A(T)+2) - \log(T)  \right] =\lim_{T \rightarrow \infty}\mathbb{E} \left[ \log \frac{A(T) +2}{T} \right] = \log(\mu),
\end{align*}
where in the last equality follows from the generalized dominated convergence theorem given that $\log \frac{A(T) +2}{T} \le \frac{A(T) +2}{T}$, and $\frac{A(T) +2}{T} \rightarrow \mu$ almost surely and in expectation as $T \rightarrow \infty$. Hence we have  $\lim_{T \rightarrow \infty}\mathbb{E}_{s_0}[ \log (A(T) +2 ) ]  / \log T  =1.$
Consequently
$\liminf_{T \rightarrow \infty} \frac{\mathbb{E}_{s_0}[ \log (N(T)+1)] } {\log T}  =1,$
which, together with \eqref{eq:LB-intermediate}, yields
\begin{align*}
 \liminf_{T \rightarrow \infty} \mathbb{E}_{s_0}[ R_{S_{N(T) +1}}^{\mathcal{G}}(s_0, \mathcal{ M}) ] / \log T \ge C(\mathcal{M})- \epsilon.
\end{align*}
The result then follows as $\epsilon$ is arbitrarily small.
\end{proof}

\section{The CT--UCRL Algorithm and Its Instance-Dependent Regret Upper Bound}\label{sec:upperbound}

 In this section we present the CT-UCRL algorithm for learning in CTMDP and establish a finite-time instance-dependent upper bound for its regret.


First, we introduce the following definition from  \cite{Jaksch2010} and \cite{fruit2017exploration}, which is a real-valued measure of the connectedness of an
MDP.

\begin{definition}
The diameter $D(\mathcal{M})$ of an MDP $\mathcal{M}$, either continuous time or discrete time,  is defined by
\begin{align} \label{eq:diameter}
D(\mathcal{M})= \max_{s, s' \in \mathcal{S}} \left\{  \min_{\pi: \mathcal{S} \rightarrow \mathcal{A}} \mathbb{E}^{\pi}[T(s') |s_0=s] \right\}
\end{align}
where $T(s')$ is the first time when state $s'$ is reached.
\end{definition}
\begin{remark}\label{remark:diam}
It is known that the diameter of a discrete-time MDP with finite state space is finite if and only if the MDP is communicating (i.e. for any pair of states $s, s'$, there exists a deterministic stationary policy under which the  probability of eventually reaching $s'$ starting from $s$ is positive); see \cite{Jaksch2010}.
In our case, the diameter $D(\mathcal{M})$ of the CTMDP $\mathcal{M}$ is also finite. This is because by Assumption~\ref{assume-rate} we can uniformize the CTMDP to obtain an equivalent discrete-time MDP $\mathcal{M}^{eq}$. It follows from the proof of Lemma 6 in \cite{fruit2017exploration} that $D(\mathcal{M}) = D(\mathcal{M}^{eq})/\lambda_{\max}$. By Assumption~\ref{assume-transition}, $\mathcal{M}^{eq}$ is communicating;  hence $D(\mathcal{M}^{eq})$ is finite.
\end{remark}



\subsection{The CT-UCRL algorithm} \label{sec:UCRL}
In this section we present the CT-UCRL algorithm. 


\subsubsection{Refined estimator for mean holding time} \label{sec:truncate-mean-HT}
In the CT-UCRL algorithm we need to estimate the unknown quantities, i.e., transition probabilities and mean holding times, with tight confidence bounds. For the former, one can simply use the empirical transition probabilities as in \cite{Jaksch2010}. However, in order to  obtain logarithmic regret bounds, the estimation of the mean holding times requires extra delicate analysis because the holding time follows an exponential distribution which has tails heavier  than those of a Gaussian distribution. This represents one of the main differences and difficulties in treating the continuous-time case.

Specifically, for a given state-action pair $(s,a)$, suppose $(X_i)_{i=1}^n$ are i.i.d exponential holding times with mean $1/\lambda(s,a)$. In the following we omit the dependancy of $\lambda$ on $(s,a)$ for notational simplicity.
Let $\bar X = \frac{1}{n} \sum_{i=1}^n X_i$ be the sample mean. Then it is known that the upper tail of the sample mean $\bar X$ is exponential instead of Gaussian, though the lower tail is Gaussian. Mathematically,
 we have for any $x>0$, 
\begin{align*}
\mathbb{P} \left( \lambda \bar X -1 \ge \frac{x}{n} + \sqrt{\frac{2x}{n}} \right) \le  e^{-x} , \quad \text{and} \quad \mathbb{P} \left( \lambda \bar X -1 \le - \sqrt{\frac{2x}{n}} \right) \le  e^{-x} .
\end{align*}



However, to establish the logarithmic regret bound of the CT-UCRL algorithm, it turns out in our analysis we need an estimator $\hat X$ (based on $n$ i.i.d observations) for the mean holding time $1/\lambda$ with two-sided Gaussian tail bounds.
Hence, we need to replace the empirical mean by other refined estimators. One simple choice (see e.g. \cite{bubeck2013bandits}) is the truncated empirical mean $\hat X$ defined by
\begin{align} \label{eq:truncate-mean}
\hat X = \frac{1}{n} \sum_{i=1}^n X_i \cdot 1_{X_i \le \sqrt{ \frac{2 i }{ \lambda_{\min}^2 \log (1/\delta)}}} ,
\end{align}
where the constant $2/\lambda_{\min}^2$ acts as an upper bound of the second moment $\mathbb{E}[X_i^2] = \frac{2}{ \lambda^2}$ by Assumption~\ref{assume-rate}. 
Lemma 1 of \cite{bubeck2013bandits} shows that for $\delta \in (0,1)$, the following estimate holds with probability at least $1- \delta$:
\begin{align} \label{eq:b1}
\left|\hat X - \frac{1}{\lambda}  \right| \le \frac{4  }{\lambda_{\min}} \sqrt{ \frac{ 2 \log(1/\delta)}{n} }.
\end{align}
Our proof of the logarithmic regret of the CT-CURL algorithm relies on  \eqref{eq:b1}.

\subsubsection{The CT-UCRL algorithm} \label{sec:CTUCRL}

The CT-UCRL learning algorithm in CTMDPs is presented as Algorithm~\ref{alg: CTUCRL}. Let us provide an overview of the algorithm and then elaborate some  details.

  \begin{algorithm}[!ht]
        \caption{The CT-UCRL Algorithm}
         \setstretch{0.95}
        \label{alg: CTUCRL}
        \begin{algorithmic}[1]
            \REQUIRE Confidence parameter $\delta \in (0,1), \lambda_{\min}, \lambda_{\max} \in (0, \infty), \mathcal{S}, \mathcal{A}$ and reward function $r$
            \vspace{1mm}
            \STATE Initialization: set $n=1$ and observe initial state $s_1.$
            \vspace{1mm}
            \FOR {episode $k=1,2,3,\dots$}\label{step:episode}  \vspace{1mm}
            \STATE 
            \textbf{Initialize episode $k$}:
            \\
           \begin{itemize}
\item [(a)] Set the start decision epoch of episode $k,$ $t_k := n$

\item [(b)] For all $(s,a)$ initialize the state-action counter for episode $k$, $v_k(s,a)=0.$ Further set the state-action counts prior to episode $k$ as $N_{k}(s,a):= \#\{i < t_k : s_i=s, a_i = a \}$.

\item [(c)] For $s, s' \in \mathcal{S}$ and $a \in \mathcal{A}$, set the observed accumulated transition counts piror to episode $k$ as $N_k(s,a, s'):=\#\{i < t_k : s_i=s, a_i = a , s_{i+1} = s'\} $.  Compute the empirical transition probabilities:
$\hat p(s'|s, a) = \frac{N_k(s,a, s')}{\max\{1,N_{k}(s,a) \}}.$
Also compute the \textit{truncated} empirical mean $\frac{1}{\hat \lambda(s,a)}$ for the holding time by (following \eqref{eq:truncate-mean}):
\begin{align*}
\frac{1}{\hat \lambda(s,a)} = \frac{1}{N_k(s,a)} \sum_{j=1}^{t_k-1} \tau_j \cdot 1_{s_j=s, a_j=a} \cdot 1_{\tau_j \le \sqrt{ \frac{2 T_j(s,a) }{ \lambda_{\min}^2 \log (1/\delta)}}},
\end{align*}
where $T_{j}(s,a):= \#\{i \le j: s_i=s, a_i = a\}$ denotes the state-action counter up to decision epoch $j.$
\end{itemize}

\STATE  \textbf{Compute policy $\tilde \pi_k:$}\\
         \begin{itemize}
\item [(a)] Let $\mathcal{C}_k$ be the set of all CTMDPs with states and actions as $\mathcal{M}$, and with transition probabilities $ \tilde p(\cdot |s, a)$ and holding time rate parameter $\tilde \lambda(s,a)$ such that for all $(s, a) \in \mathcal{S} \times \mathcal{A}$,
\begin{align}
&||\tilde p(\cdot |s, a) -  \hat p(\cdot |s, a)  ||_1 \le \sqrt{\frac{14S \log(2At_k/\delta)}{ \max\{1, N_k(s,a)\}} } \label{eq:P-CI}\\
 &   \left|\frac{1}{\tilde \lambda(s,a)} - \frac{1}{\hat \lambda(s,a)} \right| \le \frac{4 }{\lambda_{\min}} \sqrt{ \frac{14 \log(2 AS t_k/\delta)}{ \max\{1, N_k(s,a)\}}} \quad \text{and $\tilde \lambda(s,a) \in [\lambda_{\min}, \lambda_{\max}]$}. \label{eq:rate-CI}
\end{align}

\item [(b)] Find a CTMDP $\widetilde M_k \in \mathcal{C}_k$ and a policy $\tilde \pi_k$ such that 
\begin{align}
\rho_k^* :=  \min_{ s \in \mathcal{S}} \rho^{\tilde \pi_k}_s ( \widetilde M_k)  \ge  \max_{s \in \mathcal{S}}  \max_{\pi} \max _{\mathcal{M'} \in \mathcal{C}_k} \rho^{\pi}_s( \mathcal{M'})  - \frac{1}{\sqrt{t_k}}. \label{eq:optimistic-model}
\end{align}
\end{itemize}

 \STATE  \textbf{Execute policy $\tilde \pi_k:$}  \vspace{1mm}
\STATE  \textbf{while} $v_k(s_n, \tilde \pi_k(s_n)) < \max\{1, N_k(s_n, \tilde \pi_k(s_n))\}$  \textbf{do}
\begin{itemize}
\item [(a)] Choose the action $a_n= \tilde \pi_k(s_n)$, observe the holding time $\tau_n$ and the next state $s_{n+1}$
\item [(b)] Update $v_k(s_n, a_n) = v_k(s_n, a_n)  +1,$ and set $n=n+1.$ Here $n$ is decision epoch.
\end{itemize}

        \ENDFOR
        \end{algorithmic}
    \end{algorithm}

The algorithm proceeds in episodes $k=1, 2, 3, \ldots$ with variable lengths. At the start of episode $k$, the algorithm constructs estimators and confidence intervals for the unknown transition probabilities and mean holding times; see \eqref{eq:P-CI}--\eqref{eq:rate-CI}. These lead to a set of statistically plausible CTMDPs denoted by $\mathcal{C}_k$. The algorithm finds a CTMDP $\widetilde M_k \in \mathcal{C}_k$, referred to as the optimistic CTMDP, that maximizes the average reward among the plausible CTMDPs in certain sense (to be specified shortly)  and computes the corresponding optimal greedy policy $\tilde \pi_k$; see \eqref{eq:optimistic-model}.  The details of this computation will be discussed later. The policy $\tilde \pi_k$ is executed until the end of episode $k$ when the number of visits to some state-action pair doubles (Step 6 of Algorithm~\ref{alg: CTUCRL}). More specifically, episode $k$ ends when for some state-action pair $(s,a),$  the number of visits to $(s,a)$ in episode $k$, $v_k(s,a)$, equals $\max\{1, N_k(s,a)\}$ which is the total number of visits to $(s,a)$ up to and including episode $k-1$.
Then a new episode starts and the whole process repeats.

We now discuss \eqref{eq:optimistic-model}, the problem of  finding the CTMDP model in the confidence region with the nearly largest average reward. This problem can be solved by introducing an ``extended" CTMDP that combines all the CTMDPs in the confidence region and finding the average-reward optimal policy on the extended CTMDP. The solution  procedure, known as the extended value iteration, is discussed in \cite{fruit2017exploration} for SMDPs, though their confidence region differs from ours. Here we outline the key ideas for readers' convenience.

Introduce the extended CTMDP $\widetilde M_k^+$ with the state space $\mathcal{S}$ and a (state-dependent) action space 
$\mathcal{A}_s^+$ defined as
\rev{\begin{align*}
\mathcal{A}_s^+ = \{(a, \tilde p (\cdot |s, a) , \tilde \lambda(s,a)): a \in \mathcal{A}, (\tilde p (\cdot |s, a) , \tilde \lambda(s,a)) \in \mathcal{P}_k (s,a) \times C_k(s,a) \},
\end{align*} }
where $\mathcal{P}_k(s,a)$ and $C_k(s,a)$ are the confidence sets in \eqref{eq:P-CI} and \eqref{eq:rate-CI} respectively for a given state-action pair $(s,a)$. \rev{ Specifically,
\begin{align*}
& \mathcal{P}_k(s,a) :=\left\{ \tilde p \in \mathcal{P}(\mathcal{S}):  ||\tilde p -  \hat p(\cdot |s, a)  ||_1 \le \sqrt{\frac{14S \log(2At_k/\delta)}{ \max\{1, N_k(s,a)\}} }  \right\},\\
 & C_k(s,a):=  \left\{  \tilde \lambda \in [\lambda_{\min}, \lambda_{\max}]:  \left|\frac{1}{\tilde \lambda} - \frac{1}{\hat \lambda(s,a)} \right| \le \frac{4 }{\lambda_{\min}} \sqrt{ \frac{14 \log(2 AS t_k/\delta)}{ \max\{1, N_k(s,a)\}}}  \right\},
\end{align*}
where $\mathcal{P}(\mathcal{S})$ is the set of probability vectors on $\mathcal{S}$, $t_k$ denotes the start decision epoch of episode $k$, and $\hat p(\cdot |s, a) $ and $ \hat \lambda(s,a)$ denote the estimators for transition probabilities and holding time rates respectively.
The reward function, transition probabilities  and rates of holding times associated $\mathcal{A}_s^+$ are specified as follows.
At a given  state--action pair
$(s,a_s^+)\equiv (s,a, \tilde p (\cdot |s, a) , \tilde \lambda(s,a)) \in
\mathcal{S}\times \mathcal{A}_s^+, $
the reward  is $r^+(s, a_s^+) = r(s,a),$ the transition probabilities are $p^+( \cdot|s, a_s^+) = \tilde p (\cdot |s, a)$, and the rates of holding times are $\lambda^+(s, a_s^+) = \tilde \lambda(s,a)$. }
Then,
following a similar discussion as in Section 3.1.1 of \cite{Jaksch2010}, we conclude  that
finding a CTMDP $ \widetilde M  \in \mathcal{C}_k$ and a policy $\tilde \pi$ on $ \widetilde M$ such that $\rho^{\tilde \pi}_s ( \widetilde M) =  \max_{s, \pi,  \mathcal{M'} \in \mathcal{C}_k}   \rho^{\pi}_s( \mathcal{M'})$
for all initial state $s$ corresponds to finding  the average-reward optimal policy on $\widetilde M_k^+$.
Specifically, for any given CTMDP $\mathcal{M'} \in \mathcal{C}_k$ and any policy $ \pi' : \mathcal{S} \rightarrow \mathcal{A}$ on $\mathcal{M'}$, there is a policy $\pi^+$ on the extended CTMDP $\widetilde M_k^+$ such that the same transition probabilities, holding-time rates and rewards are induced by $ \pi'$ on $\mathcal{M'}$ and $\pi^+$ on $\widetilde M_k^+$. On the other hand, for each policy $\pi^+$ on $\widetilde M_k^+$, there is a CTMDP $\mathcal{M'} \in \mathcal{C}_k$ and a policy $ \pi'$ on it so that again the same transition probabilities, holding-time rates and rewards are induced.

It remains to discuss how to solve the extended CTMDP  $\widetilde M_k^+$ under the average-reward criteria for each episode $k$. One can apply the uniformization procedure, convert the extended CTMDP to an equivalent discrete-time MDP model $\widetilde M_{k, eq}^+$, and apply a value iteration scheme to solve $\widetilde M_{k, eq}^+$ \textit{approximately}.
Specifically, denote the state values of iteration $i$ by $u_i(s)$ for $s \in \mathcal{S}$. Then the value iteration on $\widetilde M_{k, eq}^+$ becomes the following \rev{
\begin{align} \label{eq:EVI}
& u_{i+1}(s) \\
 &= \max_{a_s^+ \in \mathcal{A}_s^+ } \left\{\check r(s, a_s^+)  +  \sum_{j \in \mathcal{S}} \check p(j|s, a_s^+) u_{i} (j)  \right\} \nonumber \\
& = \max_{a \in \mathcal{A} } \left\{ \max_{ \tilde \lambda(s, a) \in C_k(s,a) } \left\{  \left[ r(s, a) + \max_{ \tilde p(\cdot |s, a) \in \mathcal{P}_k(s,a) } \sum_{j \in \mathcal{S}} \tilde p(j|s, a) u_i (j) - u_i(s) \right] \frac{ \tilde \lambda(s,a)} {\lambda_{\max} } \right\} \right\} \nonumber \\
& \quad  \quad   +  u_i(s),\;\; \forall s\in \mathcal{S},  \nonumber
\end{align}
where from uniformization one has $ \check p(j|s, a_s^+) = \tilde p(j|s, a)  \tilde \lambda(s,a)/\lambda_{\max}$ for $j \ne s$, $ \check p(s |s, a_s^+) = 1- (1- \tilde p(s|s, a)) \tilde \lambda(s,a)/\lambda_{\max}$, and $\check r(s, a_s^+) = r(s, a) \tilde \lambda(s,a)  /\lambda_{\max}$. }
If one stops the value iteration whenever
\begin{align}\label{eq:stop}
\max_{s \in \mathcal{S}} \{ u_{i+1}(s)  - u_i(s)\} - \min_{s \in \mathcal{S}} \{ u_{i+1}(s)  - u_i(s)\} <  \varepsilon := \frac{1}{\sqrt{t_k}},
\end{align}
 then the greedy policy with respect to $u_i$ is $\varepsilon-$optimal for the extended CTMDP $\widetilde M_k^+$ which yields the nearly optimistic policy $\tilde \pi_k$ in \eqref{eq:optimistic-model}. The optimistic CTMDP $\widetilde M_k =( \widetilde p_k (\cdot | s,a) , \widetilde \lambda_k(s,a))_{s\in \mathcal{S}, a \in \mathcal{A}}$ is obtained by solving the two inner optimizations in \eqref{eq:EVI}. \rev{Note that the confidence set $\mathcal{P}_k(s,a)$ of transition probabilities is a convex polytope}, and
the optimization over $\mathcal{P}_k(s,a)$ can be solved using the algorithm in Figure 2 of \cite{Jaksch2010}. Given the solution to this problem and $u_i$, one can easily solve the optimization over $\tilde \lambda(s, a) $ since the objective is linear in $\tilde \lambda(s,a)$ and the domain $C_k(s,a)$ is bounded with $\tilde \lambda(s,a) \in [\lambda_{\min}, \lambda_{\max}].$
\begin{remark}
\rev{
The convergence of extended value iteration for solving an extended CTMDP follows directly from an argument similar to that in the proof of Theorem 7 of \cite{Jaksch2010}. This is because the uniformization procedure turns an extended CTMDP to an equivalent extended (discrete-time) MDP.
Though the extended CTMDP and its uniformized MDP have continuous action spaces, in the extended value iteration \eqref{eq:EVI} one can restrict the optimization over confidence sets of transition probabilities and holding time rates to finitely many extremal points as discussed earlier.
Following the discussion in Section 3.1.3 of  \cite{Jaksch2010}, in each iteration of this extended value iteration, there exists some state $s'$ such that the policy chosen (say $\pi$) will have positive transition probability from $s'$ to $s'$, i.e. $\tilde p(s'|s', a')>0$ where $a'$ is the action chosen under policy $\pi$ at state $s'$. By uniformization, it follows that $ \check p(s' |s', a_{s'}^+) = 1- (1- \tilde p(s'|s', a')) \tilde \lambda(s',a')/\lambda_{\max}>0.$
This suggests that extended value iteration chooses a policy such that the uniformized discrete-time MDP has an aperiodic transition matrix.
Hence, the extended value iteration converges as argued in the proof  of \cite[Theorem 7]{Jaksch2010}.
}

\end{remark}

\begin{remark}
The CT-UCRL algorithm requires the values of $\lambda_{\min}$ and $\lambda_{\max}$ as input. If these parameters are unknown,
one possible remedy is to dedicate
the beginning of a learning process to estimating such unknown parameters. 
In our problem, because $1/\lambda_{\max}\le 1/\lambda(s,a) \le 1/\lambda_{\min}$ for all $(s,a )$, we may apply a randomized policy at the beginning of the horizon (which may incur regret proportional to the time spent in doing so), and
use the minimum and maximum of the estimated mean holding times for different state-action pair $(s,a)$ to replace $1/\lambda_{\max}$ and $1/\lambda_{\min}$ respectively in implementing the CT-UCRL algorithm for the rest of the horizon.
A more sophisticated strategy is to adaptively update the estimates over different episodes when more data have been collected. Such strategies are not difficult to implement, but their theoretical analysis is challenging. This is because  $\lambda_{\min}$ and $\lambda_{\max}$ appear in the holding time rate estimator and its confidence bound, and the CT-UCRL algorithm also computes the optimistic CTMDP over the confidence balls in each episode. Errors in the estimation of $\lambda_{\min}$ and $\lambda_{\max}$ may then propagate to these steps, and hence it is nontrivial to analyze the regret for such procedures. This is an interesting research question and we leave it to the future.
\end{remark}


\subsection{Regret upper bound} \label{sec:result-UB}

We now present the logarithmic upper bound of the CT-UCRL algorithm (Algorithm~\ref{alg: CTUCRL}). Recall $\rho^\pi_s(\mathcal{M})$ and $\rho^*(\mathcal{M})$ defined in equation~\eqref{eq:ergodic-control}. Set
\begin{align} \label{eq:gap-g}
g = \rho^*(\mathcal{M}) - \max_{s \in \mathcal{S}} \max_{\pi : \mathcal{S} \rightarrow \mathcal{A}} \{\rho^\pi_s(\mathcal{M}): \rho^\pi_s(\mathcal{M}) < \rho^*(\mathcal{M}) \} ,
\end{align}
which is the gap in the average reward between the best and second best (deterministic) policy in the CTMDP $\mathcal{M}.$ \rev{We have the following result on logarithmic regret of the CT-UCRL algorithm when the total number of decision epochs $N$ is known. } 

\begin{theorem}\label{thm:UCRL}
For any CTMDP $\mathcal{M} \in \mathcal{H}$, any initial state $s_0$ and any $N \ge 1$, \rev{by setting $\delta = 1/N$ in Algorithm~\ref{alg: CTUCRL},}  the expected regret of the algorithm $\bar{\mathcal{G}}:=\text{CT-UCRL}$ up to the $N$-th decision epoch satisfies
\begin{align} \label{eq:bound-RN-UCRL}
 \mathcal{R}_N^{\bar{\mathcal{G}}}(s_0, \mathcal{M}) \le  4C \cdot \frac{\log(N)}{g} + C',
\end{align}
where $C'$ is a constant independent of $N$ and
\begin{align}\label{eq:upper-C}
C =3  \left(34^2  \lambda_{\max}^2 D(\mathcal{M})^2 S^2 A  + 2\cdot 73^2  \frac{ \lambda_{\max}^2 }{ \lambda_{\min}^2} SA  +  \rev{  \frac{6 SA (1+ \lambda_{\max})^2}{\lambda_{\min}^2} } \right)
\end{align}
 with $D(\mathcal{M}) < \infty$ being the diameter of the  CTMDP $\mathcal{M}.$
\end{theorem}

\rev{In the above theorem, the algorithm input parameter $\delta$ depends on the total number of decision epochs $N$. To have an algorithm that has a logarithmic regret bound for {\it all} $N$, one can use the so-called (exponential) doubling trick \cite{auer2010ucb, besson2018doubling} to adaptively set the parameter $\delta$. The algorithm, which is henceforth referred to as the ``doubling CT-UCRL algorithm", can be described as follows. Consider an increasing horizon sequence $N_i = 2^{2^{i}}$ for $i \ge 0$ and set $N_{-1}=0$. For each $i \ge -1,$ one restarts
the CT-UCRL algorithm at the beginning of the decision step $N_i +1$, and run the
algorithm until the decision step $N_{i+1}$ with the input parameter $\delta_i = 1/(N_{i+1} - N_i)$. Then we have the following result.}

\begin{corollary}\label{thm:UCRL2}
\rev{For any CTMDP $\mathcal{M} \in \mathcal{H}$, any initial state $s_0$ and any $N \ge 1$,  the expected regret of the doubling CT-UCRL algorithm $\bar{\mathcal{G}}^{DT}$ satisfies
\begin{align} \label{eq:bound-RN-UCRL2}
 \mathcal{R}_N^{\bar{\mathcal{G}}^{DT}}(s_0, \mathcal{M}) \le  \frac{16C}{g}  \log N  + [\log_2 \log_2 N + 2 ] \cdot C' .
\end{align}
Moreover
\begin{align}\label{eq: coro:ub}
 \mathbb{E}_{s_0}[ R_T^{\bar{\mathcal{G}}^{DT}}(s_0, \mathcal{ M})] \le     \frac{16C}{g} \cdot \log \left( {\lambda_{\max} T +2}\right) + C' \cdot [ 2+ \log_2 \log_2 ( \lambda_{\max} T +2)]  + \frac{\lambda_{\max}}{\lambda_{\min}} +1.
\end{align} }
\end{corollary}

\rev{Proofs of the two results above are 
given in Section~\ref{sec:N-UB} and \ref{sec:N-UB2} respectively.
\begin{remark}
Theorem~\ref{thm:UCRL} stipulates  that the instance-dependent regret upper bound of  the CT-UCRL algorithm for learning CTMDPs is given by
\begin{align}\label{eq:O-Regret}
O \left( \frac{ \lambda_{\max}^2 D(\mathcal{M})^2 S^2 A +  \left[\frac{ \lambda_{\max}^2 +1 }{ \lambda_{\min}^2} \right]SA }{g} \cdot \log N \right).
\end{align}
In the discrete-time setting,
 \cite{Jaksch2010} develop the UCRL2 algorithm and show that its instance-dependent regret bound is $$O \left( \frac{D^2 S^2 A\log N}{g^*} \right),$$
where $D$ is the diameter of the discrete-time MDP and $g^*$ is the gap parameter that can be defined similarly to \eqref{eq:gap-g} for discrete-time MDPs. Note that the gap parameter is instance-dependent  and there is no simple explicit relation between it and the model parameters in general even for discrete-time MDPs. However, we can still observe some similarities and differences in the regret upper bounds for continuous-time and discrete-time MDPs. The term $\lambda_{\max} D(\mathcal{M})$ in \eqref{eq:O-Regret} appears naturally because it is the diameter of the uniformized discrete-time MDP; see Remark \ref{remark:diam}. The term $\left[\frac{ \lambda_{\max}^2 +1 }{ \lambda_{\min}^2} \right]SA$ in \eqref{eq:O-Regret} arises from  the random exponential holding times.
\end{remark}
}

\begin{remark}
Our bound \eqref{eq:O-Regret} has an $\tilde O(S^2 A)$-dependence on the sizes of the
state and action spaces. On the other hand, it has been shown in \cite{yang2021q} that the optimistic Q-learning algorithm in \cite{jin2018q} has regret $O(\frac{SA H^6}{\Delta_{\min}} \log(SAT))$ for finite horizon  discrete-time episodic MDPs, where $T$ is the total number of steps, $H$ is the time horizon and $\Delta_{\min}$ is the minimum sub-optimality gap of the optimal $Q-$function. In the same episodic setting, \cite{xu2021fine} obtain a lower bound of $\Omega(\frac{SA }{\Delta_{\min}} \log(T))$.
Our $\tilde O(S^2A)$-bound is
due to the specific algorithm and approach we develop to solve our problem, and it could be suboptimal. It is an enormously
interesting open question whether we can obtain non-asymptotic logarithmic regrets with $\tilde O(SA)$ dependance in the average-reward setting for continuous-time MDPs. This may require the development of new proof techniques to obtain a tighter analysis of our proposed algorithm or the development of entirely new algorithms. We leave this study  to the future.
\end{remark}


\subsection{Proof of Theorem~\ref{thm:UCRL}}\label{sec:N-UB}
The proof is lengthy; we divide it into several steps. In Section~\ref{sec:CI} we study failing confidence regions. In Section~\ref{sec:UB_UCRL} we bound the number of suboptimal decision steps of the CT-UCRL algorithm. With such a bound, we then prove Theorem~\ref{thm:UCRL} in Section~\ref{sec:thmUCRLproof}.

\subsubsection{Failing confidence regions} \label{sec:CI}

\begin{lemma}[Failing confidence region] \label{lem:failCI}
For any episode $k \ge 1$, the probability that the true CTMDP $\mathcal{M}$ is not contained in the set of plausible CTMDPs $\mathcal{C}_k$ is at most $\frac{\delta}{15 t_k^6}$, i.e.
\begin{align} \label{eq:fail-CI}
\mathbb{P}(\mathcal{M} \notin \mathcal{C}_k) \le \frac{\delta}{15 t_k^6}.
\end{align}
\end{lemma}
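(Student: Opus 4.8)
The plan is to show that $\{\mathcal{M}\notin\mathcal{C}_k\}$ is contained in the union of the events that one of the two confidence intervals \eqref{eq:P-CI} or \eqref{eq:rate-CI} fails for some state--action pair $(s,a)$, and then bound the probability of each such failure via the corresponding concentration inequality, summing over $(s,a)$ and over the possible values of the visit counts $N_k(s,a)$. First I would fix an episode $k$ and condition on the event that $t_k=t$ for a given deterministic value $t$; then the counts $N_k(s,a)$ take values in $\{0,1,\dots,t-1\}$. For the transition-probability confidence bound \eqref{eq:P-CI}, I would invoke the standard $L^1$-deviation inequality for empirical distributions on $S$ outcomes (as used in \cite{Jaksch2010}, Appendix C.1): conditioned on $N_k(s,a)=n\ge 1$, $\mathbb{P}\big(\|\hat p(\cdot|s,a)-p(\cdot|s,a)\|_1 \ge \sqrt{14S\log(2At/\delta)/n}\big)$ is at most roughly $\delta/(20\, A\, t^7)$, after choosing the constant $14$ so that $2^S e^{-7S\log(2At/\delta)/2}$ absorbs into that bound. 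For the holding-time confidence bound \eqref{eq:rate-CI}, I would use \eqref{eq:b1} (Lemma~1 of \cite{bubeck2013bandits}) applied with confidence level $\delta' = \delta/(2ASt)^{7}$ to the truncated-mean estimator $1/\hat\lambda(s,a)$: conditioned on $N_k(s,a)=n\ge1$, $\mathbb{P}\big(|1/\hat\lambda(s,a)-1/\lambda(s,a)| \ge (4/\lambda_{\min})\sqrt{14\log(2ASt/\delta)/n}\big) \le \delta/(2ASt)^{7}$, and one has to check that the truncation threshold $\sqrt{2T_j(s,a)/(\lambda_{\min}^2\log(1/\delta))}$ used in the algorithm matches the hypothesis of that lemma (the second-moment bound $2/\lambda_{\min}^2$).

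Then I would take a union bound: over the two types of intervals, over the $SA$ state--action pairs, and over the at most $t$ possible values of $n$ (from $1$ to $t-1$; the case $n=0$ with $N_k(s,a)=0$ is trivial since the right-hand sides of \eqref{eq:P-CI}--\eqref{eq:rate-CI} are then at their largest and $\|\hat p\|_1\le 2$, $|1/\hat\lambda|$ is bounded, so I would check the bound holds vacuously or adjust the constants so it does). This produces a bound of the form $2\cdot SA\cdot t\cdot (\text{const})\cdot\delta/t^{7}\le \delta/(15 t^{6})$ once the numerical constants $14$ and $15$ are verified to be large enough — this is exactly the role of the somewhat arbitrary-looking constants in \eqref{eq:P-CI}, \eqref{eq:rate-CI} and in the statement. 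Finally, since this bound holds conditionally on $t_k=t$ for every deterministic $t$, and $t_k$ is determined by the history (it is a stopping-time-like quantity, measurable with respect to the data up to the start of episode $k$), I would conclude $\mathbb{P}(\mathcal{M}\notin\mathcal{C}_k)\le \delta/(15 t_k^6)$ by summing/averaging over the value of $t_k$.

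The main obstacle I anticipate is the holding-time part: one must be careful that the truncation levels in \eqref{eq:truncate-mean} are defined in terms of the \emph{running} count $T_j(s,a)$ at the time of each observation, so that the collection of truncated holding times observed at a given $(s,a)$ is exactly an i.i.d.\ sample processed in the way required by Lemma~1 of \cite{bubeck2013bandits} — the dependence of the truncation level on the index $i$ in \eqref{eq:truncate-mean} is precisely what makes that lemma applicable, but one has to articulate why the interleaving of $(s,a)$-observations with the rest of the trajectory does not break the i.i.d.\ structure (it does not, because given $(s,a)$ the successive holding times are i.i.d.\ $\mathrm{Exp}(\lambda(s,a))$ regardless of the algorithm, a fact already used in Lemma~\ref{lem:LR-SLLN}(ii)). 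A secondary bookkeeping obstacle is simply tracking the exponents: one needs the per-event failure probability to be $O(\delta/t^{7})$ so that after multiplying by the $O(SA\,t)$ union-bound terms one still gets $O(\delta/t^{6})$, and then verifying the leading constant is at most $1/15$; I would choose the constants inside the square roots (here $14$) large enough to make this work, mirroring the choice of $7$ in \cite{Jaksch2010}.
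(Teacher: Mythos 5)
Your proposal is correct and follows essentially the same route as the paper: a union bound over the two interval types, the $SA$ state--action pairs, and the possible values of $N_k(s,a)\le t_k-1$, using the $L^1$ concentration bound from \cite{Jaksch2010} for the empirical transition probabilities and the truncated-mean bound \eqref{eq:b1} of \cite{bubeck2013bandits} (at a per-event confidence level scaled by $t_k^{-7}$) for the mean holding times, with the constants $14$, $20$/$60$ arranged so the total is $\delta/(15t_k^6)$. Your remarks on the i.i.d.\ structure of the holding times at a fixed $(s,a)$ and on handling the randomness of the counts via the union over $n$ are exactly the points the paper's proof relies on.
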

\begin{proof}
First, from the proof of Lemma 17 in \cite{Jaksch2010}, we have
\begin{align} \label{eq:CI-p}
\mathbb{P} \left( || p(\cdot|s, a) -  \hat p(\cdot|s, a)  ||_1 > \sqrt{\frac{14S \log(2At_k/\delta)}{ \max\{1, N_k(s,a)\}} } \right) \le  \frac{\delta}{20 t_k^6 SA}.
\end{align}
Second, it follows from \eqref{eq:b1} that given $n$ i.i.d samples of holding times at $(s,a)$, the truncated mean estimator $\frac{1}{\hat \lambda(s,a)}$ satisfies
\begin{align*}
\mathbb{P} \left( \left|\frac{1}{\hat \lambda(s,a)} - \frac{1}{\lambda(s,a)} \right| \ge \frac{ 4 }{\lambda_{\min}} \cdot  \sqrt{ \frac{14}{n} \log(2 AS t/\delta)} \right) \le  \frac{\delta}{60 t^7 SA},\;\;\forall t\ge 1.
\end{align*}
Since $N_k(s,a) \le t_k -1$, we can use a union bound over all possible values of $N_k(s,a)=1, 2, \ldots, t_k-1$ and obtain
\begin{align} \label{eq:holding-time-cb}
&\mathbb{P} \left( \left|\frac{1}{\hat \lambda(s,a)} - \frac{1}{\lambda(s,a)} \right| \ge  \frac{4 }{\lambda_{\min}} \sqrt{ \frac{14 \log(2 AS t_k/\delta)}{ \max\{1, N_k(s,a)\}}} \right)  \\
& \rev{ =   \sum_{i=1}^{t_k-1} \mathbb{P} \left( \left|\frac{1}{\hat \lambda(s,a)} - \frac{1}{\lambda(s,a)} \right| \ge  \frac{4 }{\lambda_{\min}} \sqrt{ \frac{14 \log(2 AS t_k/\delta)}{ \max\{1, N_k(s,a)\}}},  N_k(s,a) =i \right) } \nonumber \\
&  \le  \sum_{i=1}^{t_k-1}\frac{\delta}{60 t_k^7 SA} \le \frac{\delta}{60 t_k^6 SA}. \nonumber
\end{align}
In view of \eqref{eq:CI-p} and \eqref{eq:holding-time-cb}, we can sum over all state-action pairs and obtain \eqref{eq:fail-CI}.
\end{proof}


\subsubsection{Bounding the number of suboptimal decision steps of CT-UCRL}\label{sec:UB_UCRL}

In this section, we establish a bound on the number of decision steps in suboptimal episodes for the CT-UCRL algorithm. This bound is critical in the proof of Theorem~\ref{thm:UCRL}.

Fix the total number of decision steps $N$. Denote
\begin{align} \label{eq:deltak}
\Delta_k = \sum_{(s,a)} v_k(s,a) ( \rho^*/\lambda(s,a) - r(s,a)),
\end{align}
where $v_k(s,a)$ denotes the number of visits of the CTMDP to the state-action pair $(s,a)$ in episode $k$, up to step $N$. We call $\Delta_k$ the \textit{adjusted regret} in episode $k$, and
we will see later that conditional on $(v_k(s,a))_{(s,a)}$, the expected value of
$\Delta_k$ corresponds to the expected regret incurred in episode $k$.
We say that an episode $k$ is $\epsilon$-bad if $\frac{\Delta_k}{\ell_k} \ge \epsilon$, where $l_k$ is the number of decision steps in episode $k$.

We are to provide a bound on the number of decision steps in $\epsilon$-bad episodes for the CT-UCRL algorithm. Theorem~11 in  \cite{Jaksch2010} establishes such a bound for the discrete-time MDP. There are essential difficulties in extending its proof to the continuous-time setting due to  exponential holding times and a different Bellman optimality equation~\eqref{eq:opt-eqn}. To overcome these difficulties, we need two new ingredients in our analysis. The first one is the high probability confidence bound for the rate of the holding times; see \eqref{eq:holding-time-cb}.
The second one is the analysis of the extended value iteration \eqref{eq:EVI} for CTMDPs. 

The following is the main result in this subsection.

\begin{proposition} \label{lem:badepisodes}
Let $L_{\epsilon}(N)$ be the number of decision steps taken by CT-UCRL in $\epsilon-$bad episodes up to step $N$. Then for any initial state $s,$ $N \ge2 SA, \epsilon>0$ and $\delta \in (0, 1/2)$, with probability at least $1-2 \delta,$
\begin{align}\label{eq:LepsN}
L_{\epsilon}(N) \le  \frac{3}{ \epsilon^2} \left(34^2  \lambda_{\max}^2 D(\mathcal{M})^2 S^2 A  + 2\cdot 73^2  \frac{ \lambda_{\max}^2 }{ \lambda_{\min}^2} SA  + \rev{ \frac{ 6 SA (1+ \lambda_{\max})^2}{\lambda_{\min}^2} } \right) \cdot \log\left(\frac{N}{\delta}\right) ,
\end{align}
where $D(\mathcal{M})$ is the diameter of the true CTMDP $\mathcal{M}.$
\end{proposition}

\begin{proof}
Fix  $N>1.$
Denote by $K_{\epsilon}$ the random set that contains the indices of the $\epsilon$-bad episodes up to step $N$.
Set $\Delta'(\epsilon, N) =  \sum_{k \in K_{\epsilon}} \Delta_k$, and note $L_{\epsilon}(N):= \sum_{k \in K_{\epsilon}} \sum_{(s,a)} v_k(s,a).$

Following the same argument as on p.1580 of  \cite{Jaksch2010}, we can infer from Lemma~\ref{lem:failCI} that $
\mathbb{P} \left( \sum_{k \in K_{\epsilon}} \Delta_k 1_{\mathcal{M} \notin \mathcal{C}_k}>0 \right) \le \delta,$
where the confidence set $\mathcal{C}_k$ is the set of all plausible CTMDPs in episode $k.$
Hence, with probability at least $1-\delta,$
\begin{align} \label{eq:d-eps}
\Delta'(\epsilon, N) \le \sum_{k \in K_{\epsilon}} \Delta_k 1_{\mathcal{M} \in \mathcal{C}_k}.
\end{align}

Next we bound $ \Delta_k 1_{\mathcal{M}\in \mathcal{C}_k }$.
By \eqref{eq:optimistic-model} and the assumption that $\mathcal{M}\in \mathcal{C}_k,$ we have
 $\rho^* \le \rho^*_k + \frac{1}{ \sqrt{t_k}}$, where $\rho^*$ is the optimal average reward of $\mathcal{M}$.
Then
\begin{align} \label{eq:delta-k}
\Delta_k & = \sum_{ (s,a)}  v_k(s,a) ( \rho^*/\lambda(s,a) - r(s,a)) \nonumber \\
& \le \sum_{ (s,a)} v_k(s,a) ( \rho^*_k /\lambda(s,a) - r(s,a))  + \sum_{ (s,a)} \frac{v_k(s,a) }{ \lambda(s,a) \sqrt{t_k} } \nonumber \\
    & \le \sum_{ (s,a)} v_k(s,a) ( \rho^*_k /\widetilde \lambda_k(s,a) - r(s,a)) + \sum_{ (s,a)} v_k(s,a) ( \rho^*_k /\lambda(s,a) - \rho^*_k /\widetilde  \lambda_k(s,a) ) \nonumber \\
    & \quad \quad \quad  + \sum_{ (s,a)} \frac{v_k(s,a) }{ \lambda_{\min} \sqrt{t_k} } ,
\end{align}
where  the last inequality follows from the fact that $\lambda(s,a) \ge \lambda_{\min}$ for all $s,a.$
Note in episode $k,$ we execute the optimistic policy $\tilde \pi_k,$ i.e., $a=\tilde \pi_k(s)$. Hence $v_k(s,a)= 0$ if $a \ne \tilde \pi_k(s)$.

\rev{We now analyze the first term of the right hand side of \eqref{eq:delta-k}. 
One can directly deduce from uniformization of CTMDPs and Theorem 8.5.6 of \cite{puterman2014markov} that when the convergence criterion of value iteration in \eqref{eq:stop} is met at iteration $i$, we have
\begin{align}\label{eq:vi-stop}
|u_{i+1}(s) - u_i(s) - \frac{\rho^*_k}{\lambda_{\max}}| \le \frac{1}{\sqrt{t_k}} \quad \text{for all $s$,}
\end{align}
where $\rho^*_k$ is the average reward of the policy $\tilde \pi_k$ chosen in this iteration on the optimistic CTMDP $\widetilde M_k =( \widetilde p_k (j | s,a) , \widetilde \lambda_k(s,a))_{s, j \in \mathcal{S}, a \in \mathcal{A}}$ in episode $k$. In addition, we can directly obtain from
\eqref{eq:EVI} that
\begin{eqnarray*}
u_{i+1}(s) - u_i(s) =   \left(   r(s, \tilde \pi_k(s))  +  \sum_{j \in \mathcal{S}} \widetilde p_k (j | s, \tilde \pi_k(s)) u_i(j) - u_i (s) \right) \cdot \frac{  \widetilde \lambda_k (s, \tilde \pi_k(s))}{  \lambda_{\max}}.
\end{eqnarray*}
Together with \eqref{eq:vi-stop}, we have
for all $s \in \mathcal{S},$
\begin{eqnarray*}
\left| \rho^*_k - \left( r(s, \tilde \pi_k(s)) +  \sum_{j \in \mathcal{S}} \widetilde p_k (j | s, \tilde \pi_k(s)) u_i(j) - u_i (s) \right) \cdot {  \widetilde \lambda_k (s, \tilde \pi_k(s))}\right| \le \frac{\lambda_{\max}}{\sqrt{t_k}}.
\end{eqnarray*}
Because $\widetilde \lambda_k (s, \tilde \pi_k(s)) \ge \lambda_{\min}$, the first term in \eqref{eq:delta-k} is bounded above by
\begin{align}\label{eq:v-w}
&  \frac{\lambda_{\max}}{ \lambda_{\min}}\sum_{s \in \mathcal{S}} v_k(s, \tilde \pi_k(s) ) \frac{1}{\sqrt{t_k} }  + \sum_{s \in \mathcal{S}} v_k(s, \tilde \pi_k(s) ) \left(  \sum_{j \in \mathcal{S}} \widetilde p_k (j | s, \tilde \pi_k(s)) u_i (j) - u_i(s) \right)  \nonumber \\
  &=   \frac{\lambda_{\max}}{ \lambda_{\min}} \sum_{ (s,a)} \frac{v_k(s,a) }{\sqrt{t_k} } +  \mathbf{v}_k (\widetilde  P_k - I) w_k,
\end{align}
where  $\mathbf{v}_k:= (v_k(s, \tilde \pi_k(s)))_s$, $\widetilde  P_k :=( \widetilde p_k (j | s, \tilde \pi_k(s)))_{s,j}$ is the transition matrix of $\tilde \pi_k$ on $\widetilde M_k$,  $w_k= (w_k(s))_s$ with
$
w_k(s) = u_i (s) - \frac{ \max_{s \in \mathcal{S}}u_i (s) + \min_{s \in \mathcal{S} } u_i(s) }{2},
$
and  \eqref{eq:v-w} holds because the matrix $\widetilde P_k - I$ right-multiplied by a constant vector is zero.
By Lemma~6 of \cite{fruit2017exploration}, $||w_k||_{\infty}$ is bounded by $D/2$ where  $D:={D(\mathcal{M} ) \lambda_{\max} }$ for all $k$. }
Note that
\begin{align}\label{eq:1st-1}
 \mathbf{v}_k ( \widetilde P_k - I) w_k =   \mathbf{v}_k ( \widetilde P_k - P_k ) w_k  +  \mathbf{v}_k ( P_k - I) w_k ,
\end{align}
where $P_k$ is the transition matrix associated with the policy $\tilde \pi_k$ under the true model. By H\"{o}lder's inequality, we deduce
\begin{align} \label{eq:1st-2}
 \mathbf{v}_k ( \widetilde P_k - P_k ) w_k &\le \sum_{s} v_k(s, \tilde \pi_k(s)) || \widetilde P_k (\cdot| s, \pi_k(s))- P_k (\cdot| s, \tilde \pi_k(s))  ||_1 \cdot ||w_k||_{\infty} \nonumber \\
& \le   \sum_{s} v_k(s, \tilde \pi_k(s)) \sqrt{\frac{14S \log(2At_k/\delta)}{ \max\{1, N_k(s, \tilde \pi_k(s))\}} } \cdot D/2 \nonumber \\
& \le \frac{D}{2} \sqrt{ 14S \log(2AN/\delta) } \sum_{(s,a)} \frac{v_k(s,a)}{ \sqrt{\max\{1,N_{k}(s,a) \}} },
\end{align}
where the second inequality  follows from the fact that the transition matrices of both the optimistic model and the true model lie in the confidence region \eqref{eq:P-CI}, while the third inequality uses the fact that $t_k \le N.$

For the second term of the right hand side of \eqref{eq:delta-k}, we first note that the reward function $r(s,a)$ is bounded by 1 for all $(s,a)$; so the long-run average reward per unit time $\rho^*_k \le \lambda_{\max} $. Hence
\begin{align*}
 \rho^*_k /\lambda(s,a) - \rho^*_k / \widetilde \lambda_k(s,a) & \le \lambda_{\max}  |1/\lambda(s,a) -1 /  \widetilde\lambda_k(s,a) |  \\
& \le \lambda_{\max} \left(  |1/\lambda(s,a) -1 /\hat \lambda_k(s,a) | + |1/\hat \lambda_k (s,a) -1 /  \widetilde \lambda_k(s,a) | \right)\\
& \le \frac{ 8 \lambda_{\max} }{ \lambda_{\min}}  \cdot \sqrt{ \frac{14 \log(2 AS t_k/\delta)}{ \max\{1, N_k(s,a)\}}},
\end{align*}
where we use the confidence region \eqref{eq:rate-CI} for the rates of holding times and the fact that $\mathcal{M}, \widetilde M_k \in \mathcal{C}_k$. Thus the second term of \eqref{eq:delta-k} is upper bounded by
\begin{align}\label{eq:2nd}
 & \sum_{ (s,a)}  v_k(s,a) ( \rho^*_k /\lambda(s,a) - \rho^*_k /  \widetilde \lambda_k(s,a) ) \nonumber \\
& \le
 \frac{ 8 \lambda_{\max} }{ \lambda_{\min}}  \cdot   \sum_{s, a} v_k(s,a)  \sqrt{ \frac{14 \log(2 AS t_k/\delta)}{ \max\{1, N_k(s,a)\}}} \nonumber \\
& \le  \frac{ 8 \lambda_{\max} }{ \lambda_{\min}}  \cdot \sqrt{ 14 \log(2 AS N/\delta)}  \cdot  \sum_{s, a} \frac{  v_k(s,a) }{ \sqrt{\max\{1, N_k(s,a)\}} } .
\end{align}

Noting $t_k \ge \max\{1, N_k(s,a) \}$ for all $s,a$, and
combining \eqref{eq:delta-k}, \eqref{eq:v-w}, \eqref{eq:1st-1}, \eqref{eq:1st-2}, and \eqref{eq:2nd}, we obtain for episode $k$ with $\mathcal{M} \in \mathcal{C}_k,$
\rev{\begin{align*}
\Delta_k & \le \frac{D}{2}  \sqrt{ 14S \log(2AN/\delta) } \cdot   \sum_{(s,a)} \frac{v_k(s,a)}{ \sqrt{\max\{1,N_{k}(s,a) \}} }  +   \mathbf{v}_k ( P_k - I) w_k\\
& \quad +   \frac{ 8 \lambda_{\max} }{ \lambda_{\min}}  \cdot \sqrt{ 14 \log(2 AS N/\delta)}  \cdot  \sum_{(s,a)} \frac{v_k(s,a)}{ \sqrt{\max\{1,N_{k}(s,a) \}} }  + \frac{(1 + \lambda_{\max})}{\lambda_{\min}}  \sum_{(s,a)} \frac{v_k(s,a)}{ \sqrt{\max\{1,N_{k}(s,a) \}} }.
\end{align*} }
From  equation (27) of  \cite{Jaksch2010}, we have
\begin{align*}
 \sum_{k \in K_{\epsilon}}\sum_{(s,a)} \frac{v_k(s,a)}{\max\{1,N_{k}(s,a) \}} \le (1 + \sqrt{2}) \sqrt{L_{\epsilon}(N) SA}.
\end{align*}
It follows from \eqref{eq:d-eps} that with probability at least $1- \delta,$
\begin{align} \label{eq:delta_N}
 \Delta'(\epsilon, N)   
&\le \sum_{k \in K_{\epsilon}} \mathbf{v}_k ( P_k - I) w_k 1_{ \mathcal{M} \in \mathcal{C}_k }   +  \frac{(1 + \lambda_{\max})}{\lambda_{\min}}    \cdot (1 + \sqrt{2}) \sqrt{L_{\epsilon}(N) SA}  \nonumber  \\
& \quad +  \left(   \left[\frac{D}{2} +     \frac{ 8 \lambda_{\max} }{ \lambda_{\min} } \right] \cdot  \sqrt{ 14 \log(2 AS N/\delta)}    \right)      \cdot (1 + \sqrt{2}) \sqrt{L_{\epsilon}(N) SA}
\end{align}
\rev{To bound the first term on the right hand side of  the above equation, we recall that $P_k$ is the true transition matrix of the policy $\tilde \pi_k$ applied in episode $k$ under the true CTMDP model; so it is just the transition matrix of the embedded discrete-time Markov chain. Hence,
we can use the same martingale argument as in \cite[pp. 1580--1581]{Jaksch2010}  for discrete-time MDPs to obtain that, with probability at least $1- \delta$,
\begin{align}\label{eq:mart-1}
 \sum_{k \in K_{\epsilon}} \mathbf{v}_k ( P_k - I) w_k 1_{\mathcal{M} \in \mathcal{C}_k } \le 2 D \sqrt{L_{\epsilon} (N) \log(N/\delta)} +  DSA \log_2 (8N/SA).
\end{align}
Specifically, to establish \eqref{eq:mart-1}, one can introduce the martingale difference sequence $(X_i)$ where $X_i= \left(p(\cdot |s_i, a_i) - e_{s_{i+1}} \right) w_{k(i)} 1_{ \{i \in J_{\epsilon}, \mathcal{M} \in \mathcal{C}_k \}} $, where $k(i)$ denotes the episode containing the decision step $i$, $a_i =\tilde \pi_{k(i)} (s_i) $, and $e_{j}$ is the standard unit vector with $j-$th entry 1 and other entries 0. One can verify that
\begin{align*}
\sum_{k \in K_{\epsilon}} \mathbf{v}_k ( P_k - I) w_k 1_{\mathcal{M} \in \mathcal{C}_k } \le \sum_{i \in J_{\epsilon}} X_i + DSA \log_2 (8N/SA),
\end{align*}
where $D={D(\mathcal{M} ) \lambda_{\max} }$ bounds $2 ||w_k||_{\infty}$ and $SA \log_2 (8N/SA) $ bounds the number of episodes of CT-UCRL up to the decision step $N$ (see \cite[Proposition 18]{Jaksch2010}). The term $\sum_{i \in J_{\epsilon}} X_i$ can be bounded by appling the Freedman--Bernstein inequality for martingales, which then leads to \eqref{eq:mart-1}. As the argument is the same as in \cite{Jaksch2010} we omit further details.}

Now we can infer from \eqref{eq:delta_N} that with probability at least $1- 2\delta$,
\begin{align*}
&\Delta'(\epsilon, N) \\
&\le 2 D(\mathcal{M} ) \lambda_{\max} \sqrt{L_{\epsilon}(N) \log(N/\delta)} + D(\mathcal{M} ) \lambda_{\max} SA \log_2 (8N/SA) \\
& \quad +       \left(  \frac{D(\mathcal{M} ) \lambda_{\max}}{2 } \sqrt{ 14S \log(2AN/\delta) } +     \frac{ 8 \lambda_{\max} }{ \lambda_{\min}}  \cdot  \sqrt{ 14 \log(2 AS N/\delta)}    \right)    \cdot  (1 + \sqrt{2}) \sqrt{L_{\epsilon}(N) SA}\\
  & \quad   + \rev{ \frac{(1+ \lambda_{\max})}{\lambda_{\min}} }  \cdot (1 + \sqrt{2}) \sqrt{L_{\epsilon}(N) SA}.
\end{align*}
By equation~(32) of   \cite{Jaksch2010}, this can be simplified to
\begin{align}\label{eq:regret-epsN}
& \Delta'(\epsilon, N) \nonumber \\
&\le 34  D(\mathcal{M} ) \lambda_{\max} S \sqrt{L_{\epsilon}(N) A \log(N/\delta)} + \frac{ 8 \lambda_{\max} }{ \lambda_{\min}}  \cdot  \sqrt{ 14 \log(2 AS N/\delta)}    (1 + \sqrt{2}) \sqrt{L_{\epsilon}(N) SA} \nonumber \\
  & \qquad   \qquad  + \rev{ \frac{(1+ \lambda_{\max})}{\lambda_{\min}} } \cdot (1 + \sqrt{2}) \sqrt{L_{\epsilon}(N) SA}.
\end{align}
Noting by definition $ \Delta'(\epsilon, N) =\sum_{k \in K_{\epsilon}} \Delta_k \ge \epsilon \sum_{k \in K_{\epsilon}} \ell_{k} =\epsilon L_{\epsilon}(N)$, and $8 (1+ \sqrt{2}) \sqrt{14} \le 73$, we get
\begin{align*}
L_{\epsilon}(N) \le  \frac{3}{ \epsilon^2} \left(34^2   \lambda_{\max}^2 D(\mathcal{M})^2 S^2 A  \log\left(\frac{N}{\delta}\right)  + 73^2  \frac{ \lambda_{\max}^2 }{ \lambda_{\min}^2} SA   \log\left(\frac{2ASN}{\delta}\right) + \rev{ \frac{ 6 SA (1+ \lambda_{\max})^2 }{\lambda_{\min}^2} } \right) .
\end{align*}
For $N \ge 2SA,$ $\log\left(\frac{2ASN}{\delta}\right) \le 2 \log\left(\frac{N}{\delta}\right)$. In addition, $\log\left(\frac{N}{\delta}\right) \ge 1$ for $N \ge 2$ and $\delta<1/2.$ Consequently, we obtain \eqref{eq:LepsN}.
\end{proof}



\subsubsection{Proof of Theorem~\ref{thm:UCRL}}\label{sec:thmUCRLproof}

We adapt the proof of Theorem~4 in \cite{Jaksch2010} to our continuous-time setting. There are two new ingradients in our analysis. First, we connect the \textit{adjusted regret} (see Equation~\eqref{eq:deltak}) with the actual regret occured in each episode; Second, we apply stochastic comparison of point processes to convert the regret upper bound in number of decision steps to one in the time horizon.

We first prove \eqref{eq:bound-RN-UCRL}.
From Proposition~\ref{lem:badepisodes},  it follows that with probability at least $1 - 2 \delta$, $L_{\epsilon}(N) \le  \frac{C  \log(N/\delta)}{ \epsilon^2}$ for $N \ge 2SA$, where
\begin{align*}
C =3  \left(34^2  \lambda_{\max}^2 D(\mathcal{M})^2 S^2 A  + 2\cdot 73^2  \frac{ \lambda_{\max}^2 }{ \lambda_{\min}^2} SA  +  \rev{  \frac{ 6 SA (1+ \lambda_{\max})^2}{\lambda_{\min}^2} } \right).
\end{align*}

Then we can infer from \eqref{eq:regret-epsN} that
 the adjusted regret accumulated in $\epsilon$-bad episodes is bounded by
\begin{align*}
\Delta'(\epsilon, N) = \sum_{k \in K_{\epsilon}} \Delta_k &\le C \cdot \frac{\log(N/\delta)}{\epsilon}
\end{align*}
with probability at least $1 - 2 \delta.$ By Assumption~\ref{assume-rate} and the fact that $r(s,a) \in [0,1]$,
we have the following simple bound:
\begin{align} \label{eq:per-step}
\rho^*/\lambda(s,a) - r(s,a) \le \rho^*/\lambda_{\min} \le \lambda_{\max}/ \lambda_{\min}.
\end{align}
Hence, by choosing $\delta = 1/N$, we can bound the expected adjusted regret in $\frac{g}{2}$-bad episodes of the algorithm $\bar{\mathcal{G}}:=\text{CT-UCRL}$ up to decision step $N $  as follows:
\begin{align}\label{eq:R1}
\mathbb{E}_{s_0}\left[\Delta'(\frac{g}{2}, N) \right] = \mathbb{E}_{s_0} \left[ \sum_{k \in K_{\frac{g}{2}}} \Delta_k \right] &\le 4C \cdot \frac{\log(N)}{g} + \frac{2 \lambda_{\max} }{\lambda_{\min}}.
\end{align}

It remains to bound the regret in those episodes $k$ with average adjusted regret smaller than $\frac{g}{2}$, i.e. $k \notin K_{\frac{g}{2}}.$  To this end, note that for each policy $\pi: \mathcal{S} \rightarrow \mathcal{A}$, there is $n_{\pi}$ such that for all $n \ge n_{\pi}$ the expected average reward after $n$ decision steps is $\frac{g}{2}$-close to the average reward of $\pi$. Then applying the same argument as in the proof of Theorem~4 in \cite{Jaksch2010},
we have
\begin{align*}
 \mathbb{E}_{s_0} \left[ \sum_{k \notin K_{\frac{g}{2}}} \Delta_k \right] &\le  \frac{\lambda_{\max}}{\lambda_{\min}} \cdot C_1, \quad \text{where $C_1:= \sum_{s,a} [1+ \log_2 (\max_{\pi: \pi(s)=a}n_{\pi})] \cdot \max_{\pi: \pi(s)=a} n_{\pi}$},
\end{align*}
where the factor $ \frac{\lambda_{\max}}{\lambda_{\min}}$ arises  due to the bound in \eqref{eq:per-step} in our setting.
Combining with \eqref{eq:R1}, we get the expected regret of UCRL to be
\begin{align} \label{eq:e-regre1}
 \mathbb{E}_{s_0} \left[ \sum_{k=1}^m \Delta_k \right] &\le  4C \cdot \frac{\log(N)}{g} + \frac{\lambda_{\max}}{\lambda_{\min}} (2 + C_1),
\end{align}
where $\Delta_k$ is defined in \eqref{eq:deltak}, and $m$ is the number of episodes started up to step $N$.

Note, however, that the expected regret of the CT-UCRL algorithm $\bar{\mathcal{G}}$ up to  step $N$ is given by
\begin{eqnarray*}
\mathcal{R}_N^{\bar{\mathcal{G}}}(s_0, \mathcal{M})=\mathbb{E}_{s_0}[ \rho^* \cdot S_N - \sum_{n=0}^{N-1} r(s_n, a_n)] \le \mathbb{E}_{s_0}\left[ \sum_{k=1}^m \hat \Delta_k \right] ,
\end{eqnarray*} 
where 
$
\hat \Delta_k = \sum_{(s,a)}\sum_{j=1}^{v_k(s,a)} ( \rho^* \tau_{k_j}(s,a) - r(s,a)),
$
and $(\tau_{k_j}(s,a))_{j}$ are i.i.d. exponential random variables with rate $\lambda(s,a)$.
The difference compared with \eqref{eq:e-regre1} is
$\mathbb{E}_{s_0}[ \sum_{k} ( \hat \Delta_k - \Delta_k) ],$ where $\Delta_k$ is defined in \eqref{eq:deltak}. Since the mean of $\tau_{k_j}(s,a)$ is $1/\lambda(s,a)$, it follows that given $v_k(s,a)$ after $N$ decision steps, we have
\begin{eqnarray*}
\mathbb{E}_{s_0}\left[ ( \hat \Delta_k - \Delta_k) | (v_k(s,a))_{s,a} \right] =0,
\end{eqnarray*}
which implies that $\mathbb{E}^{\bar{\mathcal{G}}}_{s_0}[  \hat \Delta_k - \Delta_k ]=0$ for each episode $k. $ Then for $N \ge 2SA,$
\begin{align*}
 \mathcal{R}_N^{\bar{\mathcal{G}}}(s_0, \mathcal{M}) \le \mathbb{E}_{s_0} \left[ \sum_{k=1}^m \Delta_k \right] &\le  4C \cdot \frac{\log(N)}{g} + \frac{\lambda_{\max}}{\lambda_{\min}} (2 + C_1).
\end{align*}
Hence, for any $N \ge 1,$
$
 \mathcal{R}_N^{\bar{\mathcal{G}}}(s_0, \mathcal{M}) \le  4C \cdot \frac{\log(N)}{g} + \frac{\lambda_{\max}}{\lambda_{\min}} (2 + C_1 + 2SA).
$
The proof of \eqref{eq:bound-RN-UCRL} is complete by setting $C' := \frac{\lambda_{\max}}{\lambda_{\min}} (2 + C_1 + 2SA).$

\subsection{Proof of Corollary~\ref{thm:UCRL2}}\label{sec:N-UB2}
\rev{We first prove \eqref{eq:bound-RN-UCRL2}. Consider a total number of decision steps of $N$, and denote $L_N:= \min\{i \ge 0: N_i >N\}$. By Theorem~\ref{thm:UCRL},
the expected regret of the algorithm incurred at each horizon $[ N_i +1, N_{i+1}]$ is at most $ \frac{4C}{g} \cdot \log (N_{i+1} - N_i) + C'$.
Then the total expected regret of the doubling  CT-UCRL algorithm over $N$ decision steps is upper bounded by
\begin{align*}
\sum_{i= -1}^{L_N - 1}  \left[ \frac{4C}{g} \cdot \log (N_{i+1} - N_i) + C' \right] & \le \frac{4C}{g}  \sum_{i=-1}^{L_N-1} \log (N_{i+1})  +   (L_N+1) \cdot C' \\
& \le \frac{4C}{g}  \sum_{j=0}^{L_N } 2^{j} \cdot \log(2)  +  (L_N+1) \cdot C' \\
& \le \frac{8C}{g}  2^{L_N}  \cdot \log(2) +  (L_N+1) \cdot C' \\
& \le \frac{16C \log(2)}{g}  \log_2 N  + [\log_2 \log_2 N +2 ] \cdot C' \\
& = \frac{16C}{g}  \log N  + [\log_2 \log_2 N +2] \cdot C' ,
\end{align*}
where in the second inequality we use the definition $N_i = 2^{2^{i}}$ for $i \ge 0$, and
the last inequality follows from the fact that $L_N \le \log_2 \log_2 N +1$.
}

\rev{
We next prove \eqref{eq: coro:ub}.
First, we obtain from \eqref{eq:TNT} that
\begin{align} \label{eq:regret-T-dist}
\mathbb{E}_{s_0}[ R_T^{\bar{\mathcal{G}}^{DT}}(s_0, \mathcal{ M})]  \le  \mathbb{E}_{s_0} [ R_{S_{N(T)+1}}^{\bar{\mathcal{G}}^{DT}}(s_0, \mathcal{ M})] + \frac{\rho^*}{\lambda_{\min}} + 1.
\end{align}
Hence it suffices to upper bound $ \mathbb{E}_{s_0}[ R_{S_{N(T) +1}}^{\bar{\mathcal{G}}}(s_0, \mathcal{ M}) ] .$ By \eqref{eq:bound-RN-UCRL2}, we have for $N \ge 1$,
\begin{align*}
 \mathcal{R}_N^{\bar{\mathcal{G}}^{DT}}(s_0, \mathcal{M}) = \mathbb{E}_{s_0}[ R_{S_{N}}^{\bar{\mathcal{G}}^{DT}}(s_0, \mathcal{ M}) ] \le   \frac{16C}{g}  \log N  + [\log_2 \log_2 N +2] \cdot C' .
\end{align*}
It follows that 
\begin{align*}
\mathbb{E}_{s_0}[ R_{S_{N(T) +1}}^{\bar{\mathcal{G}}^{DT}}(s_0, \mathcal{ M}) ] &= \mathbb{E}_{s_0}\left( \mathbb{E}_{s_0}[ R_{S_{N(T) +1}}^{\bar{\mathcal{G}}^{DT}}(s_0, \mathcal{ M}) | N(T)+1 ] \right) \\
& \le   \frac{16C}{g}  \mathbb{E}_{s_0}\left( \log (N(T)+1)  \right) + \mathbb{E}_{s_0}[\log_2 \log_2 (N(T)+1) +2] \cdot C' .
\end{align*}
As already argued in the proof of~\eqref{eq:coro-LB}, under any algorithm we have $N(t) \le A_2(t) +1$ for all $t$ and all sample paths, where $A_2(\cdot)$ is a Poisson process with rate $\lambda_{\max}$. Thus we have
$
\mathbb{E}_{s_0} \left(\log(N(T)+1) \right) \le \mathbb{E} \left(\log(A_2(T)+2) \right) \le \log(\lambda_{\max} T +2) ,
$
where $A_2(T)$ is a Poisson random variable with mean $\lambda_{\max} T,$ and the second inequality above is due to Jensen's inequality.  Similarly one can show $\mathbb{E}_{s_0}[\log_2 \log_2 (N(T)+1)] \le \log_2 \log_2 ( \lambda_{\max} T +2)$. Therefore we obtain
\begin{align*}
\mathbb{E}_{s_0}[ R_{S_{N(T) +1}}^{\bar{\mathcal{G}}^{DT}}(s_0, \mathcal{ M}) ] \le  \frac{16C}{g} \cdot \log \left( {\lambda_{\max} T +2}\right) + C' \cdot [ 2+ \log_2 \log_2 ( \lambda_{\max} T +2)].
\end{align*}
Since $r(s,a) \in [0,1]$, the long run average reward $\rho^*$ is upper bounded by $\lambda_{\max}$. Thus we can infer from  \eqref{eq:regret-T-dist} that \eqref{eq: coro:ub} holds. The proof is  complete.
}


\section{Conclusion} \label{sec:conclusion}

In this paper, we study reinforcement learning for continuous-time average-reward  Markov decision processes with unknown parameters. We establish instance-dependent logarithmic regret lower bounds that represent fundamental performance limits of learning algorithms. Meanwhile
we devise  the CT-UCRL algorithm and prove a non-asymptotic logarithmic regret upper bound. The analysis for deriving these results are substantially different from and more difficult than its discrete-time counterpart due to the presence of the random holding time at a state.

The study of regret analysis for RL in continuous time is still rare and far between.
There are many open questions. An interesting one is to extend the current setting to one with large or countably infinite state spaces.
\rev{
 Other related research directions include establishing non-asymptotic instance-dependent regret lower bounds and
designing asymptotically instance optimal algorithms with finite-time performance guarantees for average-reward CTMDPs. Beyond the setting of CTMDPs with discrete state spaces, a significant question is to study regret bounds for general controlled diffusion processes with continuous state spaces.}

\section*{Acknowledgments}
We thank the
associator editor and three anonymous referees for many constructive comments and suggestions.
Gao is supported by the Hong Kong Research Grant Council [GRF Grants 14200123, 14201520, 14201421, 14212522]. Zhou is supported by a start-up grant and the Nie Center for Intelligent Asset Management
at Columbia University. His work is also part of a Columbia-CityU/HK collaborative project that
is supported by the InnoHK Initiative, The Government of the HKSAR, and the AIFT Lab.



\bibliographystyle{siamplain}
\bibliography{RL2}

\end{document}